\newtheorem{thm}{Theorem}
\newcommand{\bv}{\boldsymbol{v}}
\newcommand{\bw}{\boldsymbol{w}}
\newcommand{\bx}{\boldsymbol{x}}
\newcommand{\by}{\boldsymbol{y}}
\newcommand{\btheta}{\boldsymbol{\theta}}
\newcommand{\bTheta}{\boldsymbol{\Theta}}
\newcolumntype{C}[1]{>{\centering\arraybackslash}m{#1}}
\title{Entropy Regularized Power \textit{k}-Means Clustering}
\author{Saptarshi Chakraborty\thanks{Joint first authors contributed equally to this work}$\,\,^{1}$,\, Debolina Paul$^{\ast 1}$,\, Swagatam Das$^2$,\, and Jason Xu\thanks{ Corresponding author: jason.q.xu@duke.edu}$\,\,^{3}$}
\date{}
\begin{document}

\maketitle

\begin{center}
$^1$Indian Statistical Institute, Kolkata, India \\
$^2$ Electronics and Communication Sciences Unit, Indian Statistical Institute, Kolkata, India\\
$^3$ Department of Statistical Science, Duke University, Durham, NC, USA. 

\normalsize
\end{center}

\begin{abstract}
Despite its well-known shortcomings, $k$-means remains one of the most widely used approaches to data clustering. Current research continues to tackle its flaws while attempting to preserve its simplicity. 
Recently, the \textit{power $k$-means} algorithm was proposed to avoid trapping in local minima by annealing through a family of smoother surfaces. However,  the approach lacks theoretical justification and fails in high dimensions when many features are irrelevant. This paper addresses these issues by introducing \textit{entropy regularization} to learn feature relevance while annealing. We prove consistency of the proposed approach and derive a scalable majorization-minimization algorithm that enjoys closed-form updates and convergence guarantees. In particular, our method retains the same computational complexity of $k$-means and power $k$-means, but yields significant improvements over both. Its merits are thoroughly assessed on a suite of real and synthetic data experiments.
\end{abstract}
\section{Introduction}
Clustering is a fundamental task in unsupervised learning for partitioning data into groups based on some similarity measure. 
Perhaps the most popular approach  is $k$-means clustering \citep{macqueen1967some}: given a dataset $\mathcal{X}=\{\bx_1,\dots,\bx_n\} \subset\mathbb{R}^p$,  $\mathcal{X}$ is to be partitioned into $k$ mutually exclusive classes so that the variance within each cluster is minimized. The problem can be cast as minimization of the objective
\begin{equation}\label{obj1} P(\bTheta) = \sum_{i=1}^n \min_{1 \le j \le k} \|\bx_i-\btheta_j\|^2,
 \end{equation}
where 
$\bTheta=\{{\btheta}_1, {\btheta}_2, \dots , {\btheta}_k\}$ denotes the set of cluster centroids, and $\|\bx_i-\btheta_j\|^2$ is the usual squared Euclidean distance metric. 

Lloyd's algorithm \citep{lloyd1982least}, which iterates between assigning points to their nearest centroid and updating each centroid by averaging over its assigned points, is the most frequently used heuristic to solve the preceding minimization problem. 
Such heuristics, however, suffer from several well-documented drawbacks. Because the task is NP-hard \citep{aloise2009np}, Lloyd's algorithm and its variants seek to approximately solve the problem and are prone to stopping at poor local minima, especially as the number of clusters $k$ and dimension $p$ grow. Many new variants have since contributed to a vast literature on the topic, including  spectral clustering \citep{DBLP:conf/nips/NgJW01}, Bayesian \citep{lock2013bayesian} and non-parametric methods \citep{DBLP:conf/icml/KulisJ12}, subspace clustering \citep{DBLP:journals/spm/Vidal11},  sparse clustering \citep{witten2010framework}, and convex clustering \citep{chi2015splitting}; a more comprehensive overview can be found in \cite{jain2010data}. 

None of these methods have managed to supplant $k$-means clustering, which endures as the most widely used approach among practitioners due to its simplicity. Some work instead focuses on ``drop-in" improvements of Lloyd's algorithm. The most prevalent strategy is clever seeding: $k$-means++ \citep{arthur2007k,ostrovsky2012effectiveness} is one such effective wrapper method in theory and practice, and proper initialization methods remain an active area of research \citep{celebi2013comparative,bachem2016fast}.
Geometric arguments have also been employed to overcome sensitivity to initialization. \cite{zhang1999k} proposed to replace the minimum function by the harmonic mean function to yield a smoother objective function landscape but retain a similar algorithm, though the strategy fails in all but very low dimensions. \citet{xu2019power} generalized this idea by using a sequence of successively smoother objectives via power means instead of the harmonic mean function to obtain better approximating functions in each iteration.
The contribution of power $k$-means is algorithmic in nature---it effectively avoids local minima from an \textit{optimization} perspective, and succeeds for large $p$ when the data points are well-separated. However, it does not address the \textit{statistical} challenges in high-dimensional settings and performs as poorly as standard $k$-means in such settings. A meaningful similarity measure plays a key role in revealing clusters \citep{de2012minkowski,chakraborty2017k}, but pairwise Euclidean distances become decreasingly informative as the number of features grows due to the curse of dimensionality. 
\par

On the other hand, there is a rich literature on clustering in high dimensions, but standard approaches such as subspace clustering are not scalable due to the use of an affinity matrix pertaining to norm regularization  \citep{ji2014efficient,abcd}. 
For spectral clustering, even the creation of such a matrix quickly becomes intractable for modern, large-scale problems \citep{zhang2019neural}.
Toward learning  effective feature representations,
\cite{huang2005automated} proposed weighted $k$-means clustering ($WK$-means), 
and sparse $k$-means \citep{witten2010framework} has become a benchmark feature selection algorithm, where  selection is achieved by imposing  $\ell_1$ and $\ell_2$ constraints on the feature weights. 
Further related developments can be found in the works of \cite{modha2003feature,li2006novel,huang2008weighting,de2012minkowski,jin2016influential}. These approaches typically lead to complex optimization problems in terms of transparency as well as computational efficiency---for instance, sparse $k$-means  requires solving constrained sub-problems via bisection to find the necessary dual parameters $\lambda^\ast$ in evaluating the proximal map of the $\ell_1$ term. As they fail to retain the simplicity of Lloyd's algorithm for $k$-means, they lose appeal to practitioners. Moreover, these works on feature weighing and selection do not benefit from recently algorithmic developments as mentioned above. 
\par 
In this article, we propose a scalable clustering algorithm for high dimensional settings that leverages recent insights for avoiding poor local minima, performs adaptive feature weighing, and preserves the low complexity and transparency of $k$-means. Called Entropy Weighted Power $k$-means (EWP), we extend the merits of power $k$-means to the high-dimensional case by introducing feature weights together with entropy incentive terms. Entropy regularization is not only effective both theoretically and empirically, but leads to an elegant algorithm with closed form solution updates. The idea is to minimize along a continuum of smooth surrogate functions that gradually approach the $k$-means objective, while the feature space also gradually adapts so that clustering is driven by informative features.  By transferring the task onto a sequence of better-behaved optimization landscapes, the algorithm fares better against the curse of dimensionality and against adverse initialization of the cluster centroids than existing methods. 
\begin{figure}[ht]
    \centering \hspace{-15pt}
    \begin{subfigure}[b]{0.34\textwidth} 
    \centering
        \includegraphics[height=\textwidth,width=\textwidth]{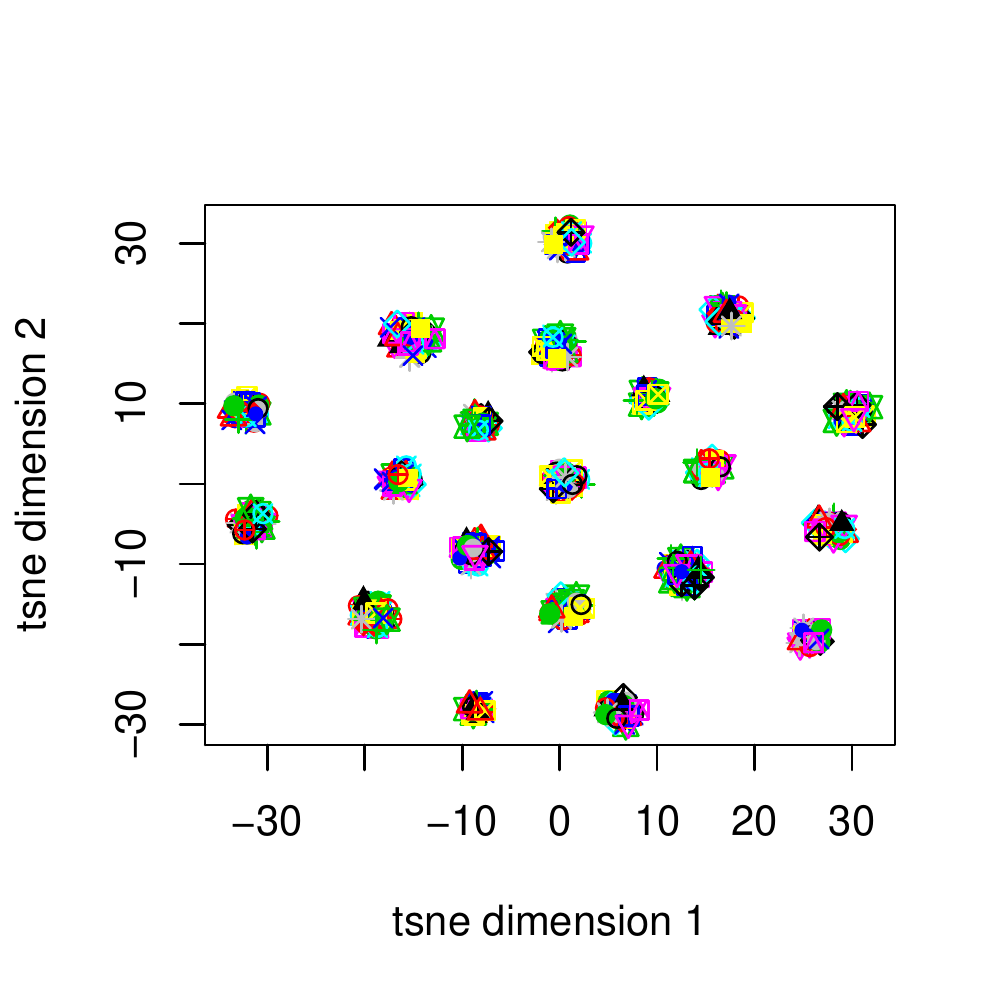}
        \caption{$k$-means}
    \end{subfigure}
    ~
    \hspace{-17pt}
    \begin{subfigure}[b]{0.34\textwidth} 
    \centering
        \includegraphics[height=\textwidth,width=\textwidth]{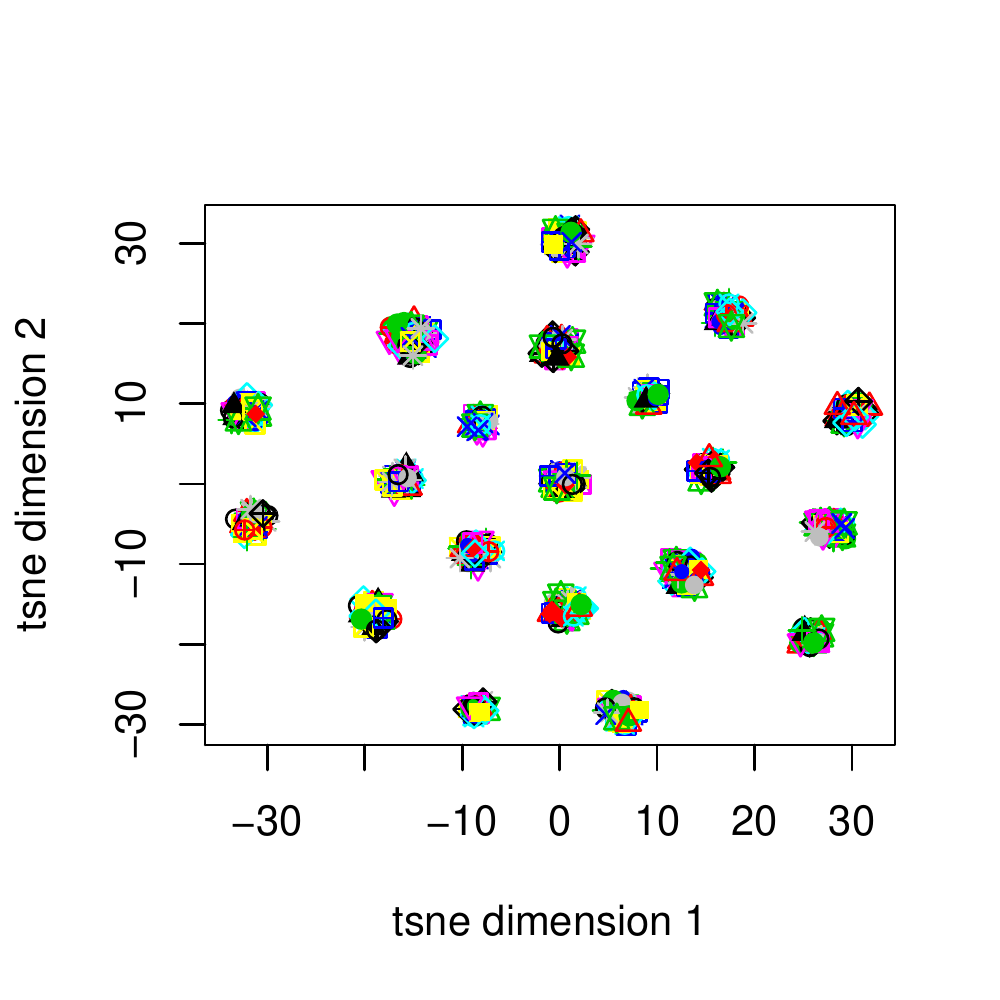}
        \caption{$WK$-means}
    \end{subfigure}
    ~
    \hspace{-17pt}
     \begin{subfigure}[b]{0.34\textwidth} 
    \centering
        \includegraphics[height=\textwidth,width=\textwidth]{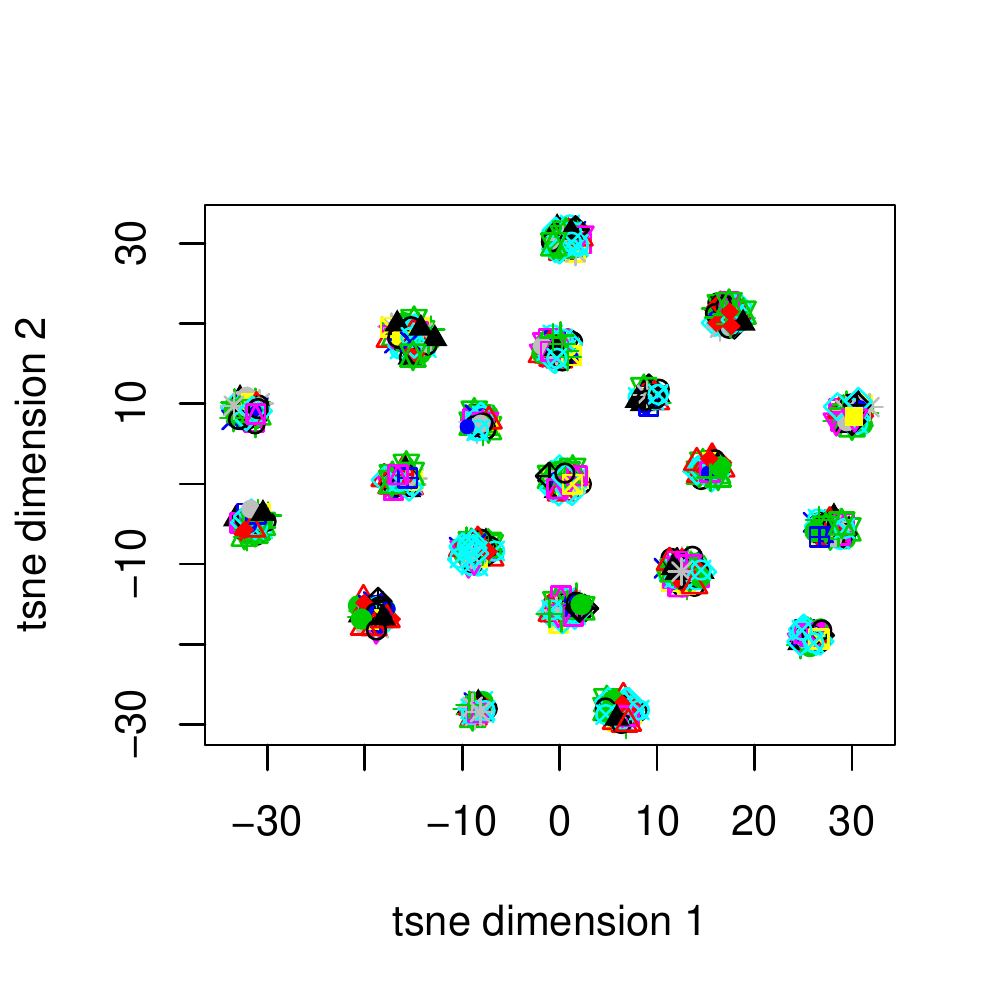}
        \caption{Power $k$-means}
    \end{subfigure}
    ~
    \hspace{-17pt}
 \begin{subfigure}[b]{0.34\textwidth} 
    \centering
        \includegraphics[height=\textwidth,width=\textwidth]{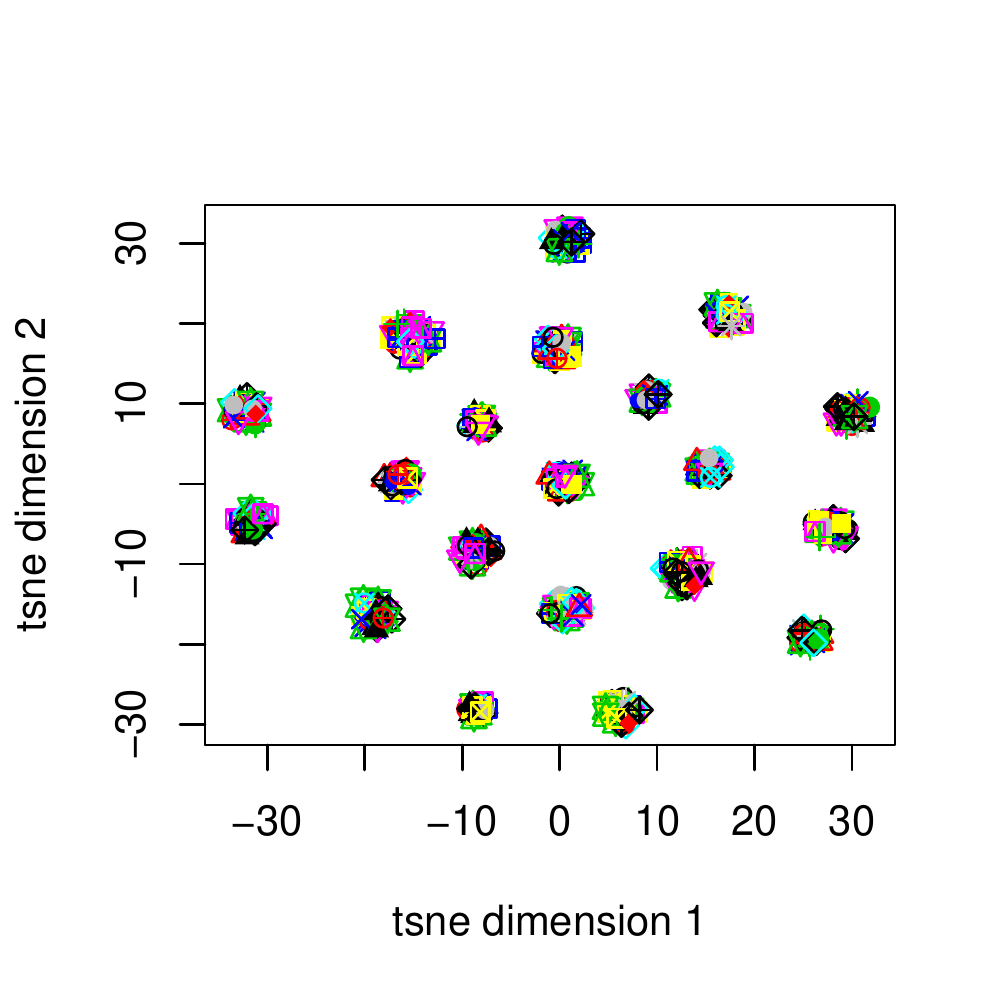}
        \caption{Sparse $k$-means}
    \end{subfigure}
    ~
    \hspace{-17pt}
 \begin{subfigure}[b]{0.34\textwidth} 
    \centering
        \includegraphics[height=\textwidth,width=\textwidth]{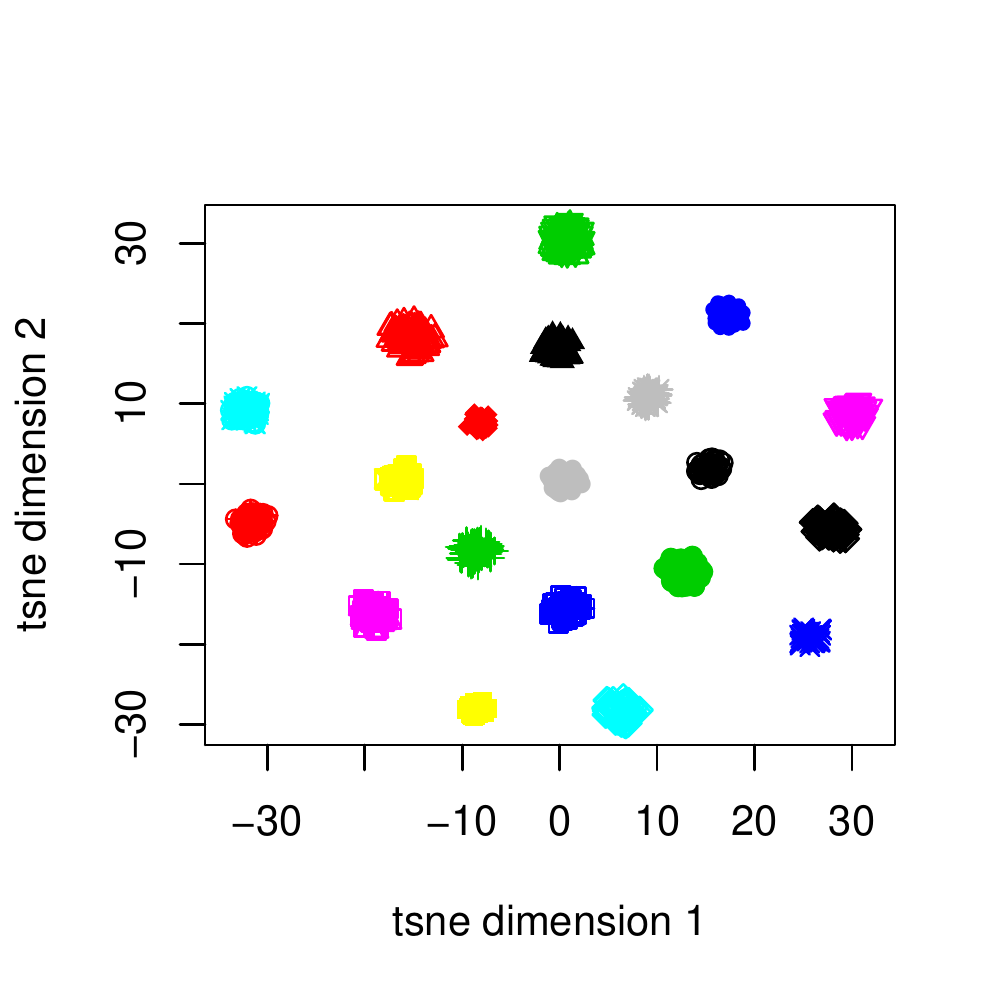}
        \caption{EWP}
    \end{subfigure}
            \caption{Peer methods fail to cluster in $100$ dimensions with $5$ effective features on illustrative example, while the proposed method achieves perfect separation. Solutions are visualized using \texttt{t-SNE}.}
            \label{fig:eg1}
\end{figure}
The following summarizes our main contributions: 
\begin{itemize}
    \item We propose a clustering framework that automatically learns a weighted feature representation while simultaneously avoiding local minima through annealing.
    \item We develop a scalable Majorization-Minimization (MM) algorithm to minimize the proposed objective function.
    \item We establish descent and convergence properties of our method and prove the strong consistency of the global solution.
    \item Through an extensive empirical study on real and simulated data, we demonstrate the efficacy of our algorithm, finding that it outperforms comparable classical and state-of-the-art approaches.
\end{itemize} 

The rest of the paper is organized as follows. After reviewing some necessary background, Section \ref{erpk} formulates the Entropy Weighted Power $k$-means (EWP) objective and provides high-level intuition. Next, an MM algorithm to solve the resulting optimization problem is derived in Section \ref{optimization}. Section \ref{theory} establishes the theoretical properties of the EWP clustering. Detailed experiments on both real and simulated datasets are presented in Section \ref{experiment}, followed by a discussion of our contributions in Section \ref{discussion}.

\paragraph{Majorization-minimization}
The principle of MM has become increasingly popular for large-scale optimization in statistical learning \citep{mairal2015incremental,lange2016mm}.
Rather than minimizing an objective of interest $f(\cdot)$ directly, an MM algorithm successively minimizes a sequence of simpler \textit{surrogate functions} $g(\btheta \mid \btheta_n)$ that  \textit{majorize} the original objective  $f(\btheta)$ at the current estimate 
$\btheta_m$. Majorization requires two conditions: tangency  $g(\btheta_m \mid \btheta_m) =  f(\btheta_m)$ at the current iterate, and domination $g(\btheta \mid \btheta_m)  \geq f(\btheta)$ for all $\btheta$. The iterates of the MM algorithm are defined by the rule 
\begin{equation}\label{eq:MMiter}
\btheta_{m+1} := \arg\min_{\btheta}\; g(\btheta \mid \btheta_m) \end{equation}
which immediately implies the descent property
\begin{eqnarray*}
f(\btheta_{m+1}) \, \leq \, g(\btheta_{m+1} \mid \btheta_{m}) 
\, \le \,  g(\btheta_{m} \mid \btheta_{m}) 
\, = \, f(\btheta_{m}). \label{eq:descent}
\end{eqnarray*}
That is, a decrease in $g$ results in a decrease in $f$.
Note that $g(\btheta_{m+1} \mid \btheta_{m} ) \le g(\btheta_{m} \mid \btheta_{m})$ does not require $\btheta_{m+1}$ to minimize $g$ exactly, so that any descent step in $g$ suffices. 
The MM principle offers a general prescription for transferring a difficult optimization task onto a sequence of simpler problems \citep{LanHunYan2000}, and includes the well-known EM algorithm for maximum likelihood estimation under missing data as a special case \citep{BecYanLan1997}. 
\paragraph{Power k-means}
\cite{zhang1999k} attempt to reduce sensitivity to initialization in $k$-means by minimizing the criterion
\begin{eqnarray}
 \sum_{i=1}^n \Big(\frac{1}{k}  \sum_{j=1}^k \|\bx_i-\btheta_j\|^{-2} \Big)^{-1} := f_{-1}(\bTheta ). \label{KHM}
\end{eqnarray}
Known as $k$-harmonic means, the method replaces the $\min$ appearing in \eqref{obj1} by the harmonic average to yield a smoother optimization landscape, an effective approach in low dimensions. 
Recently, power $k$-means clustering extends this idea to work in higher dimensions where \eqref{KHM} is no longer a good proxy for \eqref{obj1}.
Instead of considering only the closest centroid or the harmonic average, the \textit{power mean} 
between each point and all $k$ centroids provides a family of successively smoother optimization landscapes. The power mean of a vector $\by$ is defined
$M_s(\by) = \left( \frac{1}{k}\sum_{i=1}^k y_i^s \right)^{1/s}. $
Within this class, $s>1$ corresponds to the usual $\ell_s$-norm of $\by$, $s=1$ to the arithmetic mean, and  $s=-1$ to the harmonic mean. 

Power means enjoy several nice properties that translate to algorithmic merits and are useful for establishing theoretical guarantees. They are monotonic, homogeneous, and differentiable with gradient 
\begin{eqnarray}
 \frac{\partial}{\partial y_j}   M_ s( \by) & =&  \Big(\frac{1}{k}\sum_{i=1}^k y_i^s\Big)^{\frac{1}{s}-1} \frac{1}{k}y_j^{s-1} ,\label{power_mean_grad}
\end{eqnarray}
and satisfy the limits
\begin{subequations}\label{eq:limit}
\begin{equation}
    \lim_{s \to -\infty}M_s(\by)=\min\{y_1,\ldots,y_k\}
\end{equation}
\begin{equation}
    \lim_{s \to \infty}M_s(\by)=\max\{y_1,\ldots,y_k\} .
\end{equation}
\end{subequations}
Further, the well-known power mean inequality
$M_s (\by) \le M_ t (\by)$ for any $s \le t$ holds \citep{steele2004cauchy}.

The power $k$-means objective function for a given power $s$ is given by the formula
\begin{equation}\label{eq:limit2}
f_s(\Theta )=\sum_{i=1}^n M_s(\|\bx_i-\btheta_1\|^2,\ldots,\|\bx_i-\btheta_k\|^2).
\end{equation}
The algorithm then seeks to minimize $f_s$ iteratively while sending $s \rightarrow -\infty$. Doing so, the objective approaches $f_{-\infty}(\Theta)$ due to \eqref{eq:limit}, coinciding with the original $k$-means objective and retaining its interpretation as minimizing within-cluster variance. The intermediate surfaces provide better optimization landscapes that exhibit fewer poor local optima than \eqref{obj1}. Each minimization step is carried out via MM; see \cite{xu2019power} for details.

\section{Entropy Weighted Power \textit{k}-means}
\paragraph{A Motivating Example}
\label{motivation}
We begin by considering a synthetic dataset with $k=20$ clusters, $n=1000$ points, and $p=100$. Of the $100$ features, only $5$ are relevant for distinguishing clusters, while the others are sampled from a standard normal distribution (further details are described later in  Simulation 2 of Section \ref{set2}).  We compare standard $k$-means, $WK$-means, power $k$-means, and sparse $k$-means with our proposed method; sparse $k$-means is tuned using the gap statistic described in the original paper \citep{witten2010framework} as implemented in the \texttt{R} package, \texttt{sparcl}. Figure \ref{fig:eg1} displays the solutions in a $t$-distributed Stochastic Neighbourhood Embedding (\texttt{t-SNE}) \citep{maaten2008visualizing} for easy visualization in two dimensions. It is evident that our EWP algorithm, formulated below, yields perfect recovery while the peer algorithms fail to do so. This transparent example serves to illustrate the need for an approach that simultaneously avoids poor local solutions while accommodating high dimensionality.

\subsection{Problem Formulation}
\label{erpk}
Let $\bx_1,\dots,\bx_n \in \mathbb{R}^p$ denote the $n$ data points, and $\Theta_{k \times p}=[\btheta_1,\dots,\btheta_k]^\top$ denote the matrix whose rows contain the cluster centroids. We introduce a feature relevance vector $\bw \in \mathbb{R}^p$ where $w_l$ contains the weight of the $l$-th feature, and require these weights to satisfy the constraints 
\begin{equation}
\tag{C}
\sum_{l=1}^p w_l=1; \qquad w_l  \geq 0 \text{ for all } l=1,\dots,p .
\label{c1}
\end{equation}
The EWP objective for a given $s$ is now given by
\begin{equation}
\label{obj}
f_s(\Theta,\bw) = \sum_{i=1}^n M_s(\|\bx-\btheta_1\|_{\bw}^2,\dots,\|\bx-\btheta_k\|_{\bw}^2) + \lambda \sum_{l=1}^p w_l \log w_l,
\end{equation}
where the \textit{weighted} norm $\|\by\|_{\bw}^2=\sum_{l=1}^p w_l y_l^2$ now appears as arguments to the power mean $M_s$.  The final term is the negative entropy of $\bw$ \citep{jing2007entropy}. This entropy incentive is minimized when $w_l=1/p$ for all $l=1,\dots,p$; in this case, equation (\ref{obj}) is equal to the power $k$-means objective, which in turn equals the $k$-means objective when $s\rightarrow -\infty$ (and coincides with KHM for $s=-1$). EWP thus generalizes these approaches, while newly allowing features to be adaptively weighed throughout the clustering algorithm. Moreover, we will see in Section \ref{optimization} that entropy incentives are an ideal choice of regularizer in that they lead to closed form updates for $\bw$ and $\btheta$ within an iterative algorithm.

\paragraph{Intuition and the curse of dimensionality}
\label{intuition}
Power $k$-means combats the curse of dimensionality by providing smoothed \textit{objective functions} that remain appropriate as dimension increases. Indeed, in practice the value of $s$ at convergence of power $k$-means becomes lower as the dimension increases, explaining its outperformance over $k$-harmonic means \citep{zhang1999k}--- $f_{-1}$ deteriorates as a reasonable approximation of $f_{-\infty}$.  
However even if poor solutions are successfully avoided from the algorithmic perspective, the curse of dimensionality still affects the \textit{arguments} to the objective. Minimizing within-cluster variance becomes less meaningful as pairwise Euclidean distances become uninformative in high dimensions \citep{aggarwal2001surprising}.
It is therefore desirable to reduce the effective dimension in which distances are computed. 

While the entropy incentive term does not zero out variables, it weighs the dimensions according to how useful they are in driving clustering. When the data live in a high-dimensional space yet only a small number of features are relevant towards clustering, the optimal solution to our objective \eqref{obj} assigns non-negligible weights to only those few relevant features, while benefiting from annealing through the weighted power mean surfaces. 
\subsection{Optimization}
\label{optimization}
To optimize the EWP objective, we develop an MM algorithm \citep{lange2016mm} for sequentially minimizing  (\ref{obj}). As shown by \cite{xu2019power}, $M_s(\by)$ is concave if $s<1$; in particular, it lies below its tangent plane. This observation provides the following inequality: denoting $\by_m$ the estimate of a variable $\by$ at iteration $m$,
\begin{equation}
\label{mm1}
M_s(\by) \leq M_s(\by_m)+ \nabla_{\by} M_s(\by_m)^\top (\by-\by_m)
\end{equation}
Substituting $\|\bx_i-\btheta_j\|_{\bw}^2$ for $y_j$ and $\|\bx_i-\btheta_{mj}\|_{\bw_m}^2$ for $y_{mj}$ in equation (\ref{mm1}) and summing over all $i$, we obtain 
{\small
\begin{align*}
&f_s(\bTheta,\bw) \leq  f_s(\bTheta_m,\bw_m)-\sum_{i=1}^n \sum_{j=1}^k \phi_{ij}^{(m)} \|\bx_i-\btheta_{mj}\|_{\bw_m}^2\\
&-\lambda \sum_{l=1}^p (w_{m,l} \log w_{m,l}-w_l \log w_l) + \sum_{i=1}^n \sum_{j=1}^k \phi_{ij}^{(m)}\|\bx_i-\btheta_{j}\|_{\bw}^2.
\end{align*}
}%
Here the derivative expressions \eqref{power_mean_grad} provide the values of the constants $$\phi_{ij}^{(m)}=\frac{\frac{1}{k}\|\bx_i-\btheta_{m,j}\|_{\bw_m}^{2(s-1)}}{\bigg(\frac{1}{k}\sum_{j=1}^k\|\bx_i-\btheta_{m,j}\|_{\bw_m}^{2s}\bigg)^{(1-\frac{1}{s})}}.$$
The right-hand side of the inequality above serves as a \textit{surrogate function} majorizing $f_s(\bTheta, \bw)$ at the current estimate $\bTheta_m$. Minimizing this surrogate amounts to minimizing the expression
\begin{equation}\label{eq7}
 \sum_{i=1}^n \sum_{j=1}^k \phi_{ij}^{(m)} \|\bx_i-\btheta_{j}\|_{\bw}^2+\lambda\sum_{l=1}^p w_l \log w_l
\end{equation}
subject to the constraints (\ref{c1}). This problem admits closed form solutions: minimization over $\bTheta$ is straightforward, and the optimal solutions are given by
$$\btheta_{j}^* =\frac{\sum_{i=1}^n \phi_{ij} \bx_i}{\sum_{i=1}^n \phi_{ij}}.$$
To minimize equation (\ref{eq7}) in $\bw$, we consider the Lagrangian
$$\mathcal{L}= \sum_{i=1}^n \sum_{j=1}^k \phi_{ij} \|\bx_i-\btheta_{j}\|_{\bw}^2+\lambda\sum_{l=1}^p w_l \log w_l-\alpha (\sum_{l=1}^p w_l-1).$$
The optimality condition $\frac{\partial \mathcal{L}}{\partial w_l}=0$ implies  $\sum_{i=1}^n \sum_{j=1}^k \phi_{ij} (x_{il}-\theta_{jl})^2+\lambda(1+ \log w_l)-\alpha=0. $ This further implies that 

$$w_l^*  \propto \exp{\bigg\{-\frac{\sum_{i=1}^n \sum_{j=1}^k \phi_{ij}(x_{il}-\theta_{jl})^2}{\lambda}\bigg\}}.$$
Now enforcing the constraint $\sum_{l=1}^p w_l =1$, we get
$$w_l^* = \frac{\exp{\bigg\{-\frac{\sum_{i=1}^n \sum_{j=1}^k \phi_{ij}(x_{il}-\theta_{jl})^2}{\lambda}\bigg\}}}{\sum_{t=1}^p \exp{\bigg\{-\frac{\sum_{i=1}^n \sum_{j=1}^k \phi_{ij}(x_{it}-\theta_{jt})^2}{\lambda}\bigg\}}}.$$ 
Thus, the MM steps take a simple form and  amount to two alternating updates:
\begingroup
\allowdisplaybreaks
\begin{align}
    \btheta_{m+1,j} & =\frac{\sum_{i=1}^n \phi_{ij}^{(m)} \bx_i}{\sum_{i=1}^n \phi_{ij}^{(m)}}\\
    w_{m+1,l} & = \frac{\exp{\bigg\{-\frac{\sum_{i=1}^n \sum_{j=1}^k \phi_{ij}^{(m)}(x_{il}-\theta_{jl})^2}{\lambda}\bigg\}}}{\sum_{t=1}^p \exp{\bigg\{-\frac{\sum_{i=1}^n \sum_{j=1}^k \phi_{ij}^{(m)}(x_{it}-\theta_{jt})^2}{\lambda}\bigg\}}}.
\end{align}
\endgroup
The MM updates are similar to those in Lloyd's algorithm \citep{lloyd1982least} in the sense that each step alternates between updating $\phi_{ij}$'s and updating $\bTheta$ and $\bw$. These updates are summarised in Algorithm \ref{alg}; though there are three steps rather than two, the overall per-iteration complexity of this algorithm is the same as that of $k$-means (and power $k$-means) at $\mathcal{O}(nkp)$ \citep{lloyd1982least}. We require the tuning parameter $\lambda>0$ to be specified, typically chosen via cross-validation detailed in Section \ref{simulation}. It should be noted that the initial value $s_0$ and the constant $\eta$ do not require careful tuning: we fix them at $s_0=-1$ and $\eta=1.05$ across \textit{all} real and simulated settings considered in this paper.


 \begin{algorithm}
 \KwData{$\mathbf{X}\in \mathbb{R}^{n \times p}$, $\lambda>0$, $\eta>1$}
 \KwResult{$\bTheta$}
 initialize $s_0<0$ and $\bTheta_0$\\
 \textbf{repeat}:
 {\small
 \begin{align*}
 &\phi_{ij}^{(m)}  \leftarrow\frac{1}{k}\|\bx_i-\btheta_{m,j}\|_{\bw_m}^{2(s_m-1)}
  \bigg(\frac{1}{k}\sum_{j=1}^k\|\bx_i-\btheta_{m,j}\|_{\bw_m}^{2s_m}\bigg)^{(\frac{1}{s_m}-1)} \\
 &\btheta_{m+1,j}  \leftarrow \frac{\sum_{i=1}^n \phi_{ij}^{(m)} \bx_i}{\sum_{i=1}^n \phi_{ij}^{(m)}}\\
 &w_{m+1,l} \leftarrow \frac{\exp{\bigg\{-\frac{\sum_{i=1}^n \sum_{j=1}^k \phi_{ij}^{(m)}(x_{il}-\theta_{jl})^2}{\lambda}\bigg\}}}{\sum_{t=1}^p \exp{\bigg\{-\frac{\sum_{i=1}^n \sum_{j=1}^k \phi_{ij}^{(m)}(x_{it}-\theta_{jt})^2}{\lambda}\bigg\}}}\\
 & s_{m+1} \leftarrow \eta s_m 
 \end{align*}
 }%
 \textbf{until} convergence
 \caption{Entropy Weighted Power $k$-means Algorithm (EWP)}
 \label{alg}
\end{algorithm}
\section{Theoretical Properties}
\label{theory}
We note that all iterates $\btheta_m$ in Algorithm 1 are defined within the convex hull of the data, all weight updates lie within $[0,1]$, and the procedure enjoys convergence guarantees as an MM algorithm \citep{lange2016mm}.
Before we state and prove the main result of this section on strong consistency, we present results characterizing the sequence of minimizers. Theorems \ref{hull} and \ref{uniform} show that the minimizers of surfaces $f_s$ always lie in the convex hull of the data $C^k$, and converge uniformly to the minimizer of $f_{-\infty}$.  
\begin{thm}\label{hull}
Let $s \leq 1$ also let $(\bTheta_{n,s},\bw_{n,s})$ be minimizer of $f_s(\bTheta,\bw)$. Then we have $\bTheta_{n,s} \in C^k$.
\end{thm}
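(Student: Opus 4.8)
The plan is to argue by contradiction: if some centroid of a minimizer $(\bTheta_{n,s},\bw_{n,s})$ lies outside the convex hull $C=\mathrm{conv}\{\bx_1,\dots,\bx_n\}$, I will exhibit a strictly better feasible point and contradict minimality. The natural improvement is to project the offending centroid onto $C$, but in the geometry induced by the learned weights. Concretely, fix $\bw=\bw_{n,s}$ and equip $\Real^p$ with the inner product $\langle \bu,\bv\rangle_{\bw}=\sum_{l=1}^p w_l u_l v_l$, whose induced norm is exactly the weighted norm $\|\cdot\|_{\bw}$ appearing in $f_s$. Suppose $\btheta_j\notin C$ for some $j$, and let $\btheta_j'=\argmin_{\bz\in C}\|\bz-\btheta_j\|_{\bw}$ be the weighted projection; since $C$ is closed and convex and (granting the positivity claim below) $\|\cdot\|_{\bw}$ is a genuine norm, $\btheta_j'$ is well defined and lies in $C$.

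The key quantitative step uses the variational characterization of projection onto a convex set, $\langle \btheta_j-\btheta_j',\,\bz-\btheta_j'\rangle_{\bw}\le 0$ for all $\bz\in C$. Taking $\bz=\bx_i$ and expanding $\bx_i-\btheta_j=(\bx_i-\btheta_j')+(\btheta_j'-\btheta_j)$ gives, for every $i$,
\[
\|\bx_i-\btheta_j\|_{\bw}^2 \;\ge\; \|\bx_i-\btheta_j'\|_{\bw}^2 + \|\btheta_j'-\btheta_j\|_{\bw}^2 .
\]
Since $\btheta_j\notin C$ while $\btheta_j'\in C$ we have $\btheta_j\ne\btheta_j'$, so $\|\btheta_j'-\btheta_j\|_{\bw}^2>0$ and hence $\|\bx_i-\btheta_j'\|_{\bw}^2<\|\bx_i-\btheta_j\|_{\bw}^2$ \emph{simultaneously for all} $i$. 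Now I invoke monotonicity of the power mean: by the gradient formula \eqref{power_mean_grad}, $M_s$ is strictly increasing in each of its positive arguments. Replacing $\btheta_j$ by $\btheta_j'$ strictly decreases the $j$-th argument of every term $M_s(\cdots)$ while leaving the other arguments and the entropy term $\lambda\sum_l w_l\log w_l$ untouched; summing over $i$ shows $f_s$ strictly decreases, contradicting minimality. Strict monotonicity is legitimate here because every original argument satisfies $\|\bx_i-\btheta_j\|_{\bw}^2>0$ (as $\btheta_j\notin C\ni\bx_i$), so no argument sits at the boundary $y_j=0$ where the partial could vanish or blow up.

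The one gap this argument relies on, which I expect to be the main obstacle, is that $\|\cdot\|_{\bw}$ must be a true norm, i.e.\ $w_l>0$ for every $l$; otherwise a centroid could drift off along a zero-weight coordinate without affecting $f_s$, making the projection degenerate and the claim false. I would close this by showing that every minimizer lies in the relative interior of the simplex defined by \eqref{c1}. This follows from the boundary behavior of the negative entropy: if $w_{l_0}=0$, shift mass $\epsilon$ from some coordinate with $w_{l_1}>0$ onto $l_0$; the entropy term changes by $\lambda\epsilon\log\epsilon+O(\epsilon)$, while the power-mean term—depending on $\bw$ only through the arguments $\|\bx_i-\btheta_j\|_{\bw}^2$, which are \emph{linear} in $\bw$—changes by only $O(\epsilon)$ (by local Lipschitzness of $M_s$ near the minimizer, handling any $\btheta_j=\bx_i$ separately). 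The net change $\lambda\epsilon\log\epsilon+O(\epsilon)$ is negative for small $\epsilon$, contradicting minimality. Hence $\bw_{n,s}>0$, the weighted norm is genuine, the projection argument applies to each $j$, and therefore $\bTheta_{n,s}\in C^k$.
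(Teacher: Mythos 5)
Your argument is essentially the paper's own proof: project the offending centroid onto $C$ in the $\|\cdot\|_{\bw}$ geometry, apply the obtuse-angle (variational) inequality to show every weighted distance can only decrease, and invoke monotonicity of $M_s$ in each argument. The only difference is that you additionally patch the degenerate case $w_l=0$ (where $\|\cdot\|_{\bw}$ is only a seminorm) via the boundary behavior of the entropy term --- a legitimate gap that the paper's proof silently passes over --- so your version is, if anything, slightly more complete.
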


\begin{proof}
Let $P_C^{\bw} (\btheta)$ denote the projection of $\btheta$ onto $C$ w.r.t. the $\|\cdot\|_{\bw}$ norm. Now for any $\bv \in C$, using the obtise angle condition, we obtain, $\langle \btheta-P_C^{\bw} (\btheta), \bv-P_C^{\bw} (\btheta) \rangle_{\bw} \leq 0$. Since $\bx_i \in C$, we obtain,
\begin{align*}
    \|\bx_i-\btheta_j\|^2_{\bw} & = \|\bx_i-P_C^{\bw} (\btheta_j)\|^2_{\bw} + \|P_C^{\bw} (\btheta_j)-\btheta_j\|^2_{\bw}\\
    &-2 \langle \btheta-P_C^{\bw} (\btheta_j), \bx_i-P_C^{\bw} (\btheta_j) \rangle_{\bw}\\
    & \geq \|\bx_i-P_C^{\bw} (\btheta_j)\|^2_{\bw} + \|P_C^{\bw} (\btheta_j)-\btheta_j\|^2_{\bw}.
\end{align*}
Now since, $M_s(\cdot)$ is an increasing function in each of its argument, if we replace $\btheta_j$ by $P_C^{\bw} (\btheta_j)$ in $M_s(\|\bx_i-\btheta_1\|^2_{\bw},\dots,\|\bx_i-\btheta_k\|^2_{\bw})$, the objective function value doesn't go up. Thus we can effectively restrict our attention to $C^k$. Now since the function $f_s(\cdot,\cdot)$ is continuous on the compact set $C^k \times [0,1]^p$, it attains its minimum on $C^k \times [0,1]^p$. Thus, $\bTheta^* \in C^k$. 
\end{proof}
\begin{thm}
\label{uniform}
For any decreasing sequence $\{s_m\}_{m=1}^\infty$ such that $s_1 \leq 1$ and $s_m \to -\infty$, $f_{s_m}(\bTheta,\bw)$ converges uniformly to $f_{-\infty}(\bTheta,\bw)$ on $C^k\times [0,1]^p$.
\end{thm}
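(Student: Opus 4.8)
The plan is to reduce the claim to a uniform estimate on the power-mean terms alone, and then to sandwich $M_s$ between its minimum and a constant multiple of that minimum, where the constant tends to $1$ as $s \to -\infty$.

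\textbf{Reduction.} First I would observe that the entropy term $\lambda \sum_{l} w_l \log w_l$ appears identically in $f_{s_m}$ and in $f_{-\infty}$ and does not depend on $s$, so it cancels in the difference $f_{s_m}-f_{-\infty}$. It therefore suffices to show that $g_m(\bTheta,\bw) := \sum_{i=1}^n M_{s_m}(d_{i1},\dots,d_{ik})$ converges uniformly to $g_\infty(\bTheta,\bw) := \sum_{i=1}^n \min_{j} d_{ij}$, where $d_{ij} := \|\bx_i-\btheta_j\|_{\bw}^2$. Each $d_{ij}$ is continuous in $(\bTheta,\bw)$ and bounded on the compact set $C^k\times[0,1]^p$, so I may set $B := \max_i \sup_{(\bTheta,\bw)} \min_j d_{ij} < \infty$.

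\textbf{Core estimate.} Next I would establish that for $s<0$ and any $\by\in[0,\infty)^k$ with $m:=\min_j y_j$, one has $m \leq M_s(\by) \leq k^{-1/s}\, m$. The lower bound is immediate from the power-mean inequality $M_s \geq M_{-\infty}=\min$ quoted in \eqref{eq:limit}. For the upper bound, since $t\mapsto t^s$ is decreasing for $s<0$, the largest of the summands $y_j^s$ equals $m^s$, whence $\tfrac1k\sum_j y_j^s \geq \tfrac1k m^s$; raising to the negative power $1/s$ reverses the inequality and gives $M_s(\by) \leq (\tfrac1k m^s)^{1/s} = k^{-1/s} m$. Consequently $0 \leq M_s(\by)-m \leq (k^{-1/s}-1)\,m$.

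\textbf{Conclusion.} Finally I would combine these. Because $s_m\to-\infty$, we have $-1/s_m\to 0^+$ and hence $k^{-1/s_m}\to 1$. For all $m$ large enough that $s_m<0$, substituting $y_j=d_{ij}$ and summing the estimate over $i$ yields $\sup_{C^k\times[0,1]^p}\, |f_{s_m}-f_{-\infty}| = \sup |g_m-g_\infty| \leq n\,(k^{-1/s_m}-1)\,B \to 0$, which is precisely uniform convergence.

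\textbf{Main obstacle.} The delicate point is that the arguments $d_{ij}$ can vanish (for instance when a weight coordinate is zero or a centroid coincides with a data point), and $M_s$ is only naively defined for strictly positive arguments when $s<0$. The sandwich bound dissolves this difficulty: when $m=0$ both sides force $M_s(\by)=0=m$, so the estimate is consistent with the continuous extension of $M_s$ to the boundary. As an alternative I could invoke Dini's theorem, since $M_{s_m}$ is monotone decreasing in $m$ by the power-mean inequality and converges pointwise to the continuous map $\min_j(\cdot)$ on a compact set; but I prefer the explicit sandwich because it is quantitative and avoids separately verifying continuity of $M_s$ at the boundary.
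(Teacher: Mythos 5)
Your proposal is correct, but it takes a genuinely different route from the paper. The paper's proof is a two-line appeal to Dini's theorem: by the power mean inequality the convergence $f_{s_m}\downarrow f_{-\infty}$ is monotone and pointwise on the compact set $C^k\times[0,1]^p$, and the limit is continuous, so the convergence is automatically uniform. Your argument instead proves the explicit sandwich $\min_j y_j \le M_s(\by)\le k^{-1/s}\min_j y_j$ for $s<0$ and converts it into the uniform bound $\sup|f_{s_m}-f_{-\infty}|\le n\,(k^{-1/s_m}-1)\,B$. Both are valid (your core estimate checks out: the sum $\tfrac1k\sum_j y_j^s$ contains the term $(\min_j y_j)^s$, and raising to the negative power $1/s$ reverses the inequality). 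What your version buys: (i) it is quantitative, giving a rate of order $\ln k/|s_m|$, which is exactly the kind of information the paper flags as future work on annealing schedules; (ii) it does not require $\{s_m\}$ to be decreasing, only $s_m\to-\infty$, whereas Dini's theorem needs monotonicity; (iii) it handles the boundary cases where some $\|\bx_i-\btheta_j\|^2_{\bw}=0$ explicitly, a point the paper's continuity-based argument leaves implicit. What the paper's version buys is brevity and the fact that it reuses the power mean inequality already quoted in the background section. One tiny bookkeeping remark on your write-up: uniform convergence concerns only the tail of the sequence, so restricting to $m$ large enough that $s_m<0$ is harmless, but you should say (as you do) that this is why the hypothesis $s_1\le 1$ plays no real role in your argument.
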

\begin{proof}
For any $(\bTheta,\bw) \in C^k\times [0,1]^p$, $f_{s_m}(\bTheta,\bw)$ converges monotonically to $f_{-\infty}(\bTheta,\bw)$ (this is due to the power mean inequality). Since $C^k\times [0,1]^p$ is compact, the result follows immediately upon applying Dini's theorem from real analysis.
\end{proof}

Strong consistency is a fundamental requirement of any ``good" estimator in the statistical sense: as the number of data points grows, one should be able to recover true parameters with arbitrary precision \citep{terada2014strong,terada2015strong,chakraborty2019strong}.
The proof of our main result builds upon the core argument for $k$-means consistency by \cite{pollard1981strong}, and extends the argument through novel arguments involving uniform convergence of the family of annealing functions. \par 
Let $\bx_1,\dots,\bx_n \in \mathbb{R}^p$ be independently and identically distributed from distribution $P$ with support on a compact set $C \subset \mathbb{R}^p$. For notational convenience, we write $\mathcal{M}_s(\bx,\bTheta,\bw)$ for $M_s(\|\bx-\btheta_1\|_{\bw},\dots,\|\bx-\btheta_1\|_{\bw})$. We consider the following minimization problem 
$$\min_{\bTheta,\bw}\bigg\{\frac{1}{n} \sum_{i=1}^n \mathcal{M}_s(\bx_i,\bTheta,\bw)+\lambda \sum_{l=1}^p w_l \log w_l\bigg\},$$
which is nothing but a scaled version of equation \eqref{obj}. Intuitively, as $n \to \infty$, $\frac{1}{n} \sum_{i=1}^n \mathcal{M}_s(\bx_i,\bTheta,\bw)$ is very close to $\int  \mathcal{M}_s(\bx,\bTheta,\bw)dP$ almost surely by appealing to the Strong Law of Large Numbers (SLLN).
Together with \eqref{eq:limit}, as $n \to \infty$ and $s \to -\infty$ we expect
\begin{equation}\label{former}\frac{1}{n} \sum_{i=1}^n \mathcal{M}_s(\bx_i,\bTheta,\bw)+\lambda \sum_{l=1}^p w_l \log w_l\end{equation} 
to be in close proximity of 
\begin{equation}\label{latter}\int \min_{\btheta \in \bTheta} \|\bx-\btheta\|_{\bw}dP++\lambda \sum_{l=1}^p w_l \log w_l, \end{equation} so that minimizers of the \eqref{former} should be very close to the minimizers of \eqref{latter} under certain regularity conditions.\par
To formalize this intuition, let $\bTheta^*$, $\bw^*$ be minimizers of $$\Phi(\bTheta,\bw)~=\int \min_{1\leq j \leq k} \|\bx-\btheta_j\|^2_{\bw}dP+\lambda \sum_{l=1}^p w_l\log w_,$$ and define $\bTheta_{n,s}$, $\bw_{n,s}$ as the minimizers of $$\int \mathcal{M}_s(\bx,\bTheta,\bw)dP_n+\lambda  \sum_{l=1}^p w_l\log w_l ,$$ where $P_n$ is the empirical measure. We will show that $\bTheta_{n,s} \xrightarrow{a.s.} \bTheta^* $ and $\bw_{n,s} \xrightarrow{a.s.} \bw^*$ as $n \to \infty$ and $s \to -\infty$ under the following identifiability assumption:
\begin{itemize}
\item[A1] For any neighbourhood $N$ of $(\bTheta^*,\bw^*)$, there exists $\eta>0$ such that if $(\bTheta, \bw) \not \in N$ implies that $\Phi(\bTheta, \bw)> \Phi(\bTheta^*, \bw^*)+\eta$. 
\end{itemize}

Theorem \ref{slln} establishes a uniform SLLN, which plays a key role in the proof of the main result (Theorem \ref{main theorem}).
\begin{thm}
\label{slln}
(SLLN) Fix $s \leq 1$. Let $\mathcal{G}$ denote the family of functions $g_{\bTheta,\bw}(\bx)=\mathcal{M}_s(\bx,\bTheta,\bw)$. Then $\sup_{g \in \mathcal{G}}|\int g dP_n -\int g dP|\to 0$ a.s. $[P]$.
\end{thm}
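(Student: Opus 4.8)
The plan is to exploit compactness of the index set together with continuity of the integrand, reducing the uniform statement to finitely many applications of the ordinary SLLN via an $\epsilon$-net argument. This is the classical route to a Glivenko--Cantelli conclusion when the family is indexed by a compact metric space and the functions depend continuously on the index; here the index set is $C^k \times [0,1]^p$, consistent with Theorems \ref{hull} and \ref{uniform}.

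First I would verify that $\mathcal{G}$ admits a bounded (hence $P$-integrable) envelope. Since the support $C$ is compact, the squared weighted distances $\|\bx - \btheta_j\|_{\bw}^2$ are uniformly bounded over $\bx \in C$, $\bTheta \in C^k$, and $\bw \in [0,1]^p$, say by $B := \mathrm{diam}(C)^2$. By the power mean inequality and the limit $M_\infty = \max$, we get $0 \le g_{\bTheta,\bw}(\bx) \le \max_j \|\bx - \btheta_j\|_{\bw}^2 \le B$ for every index, so $G \equiv B$ serves as a constant envelope.

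Next I would establish that $(\bx,\bTheta,\bw) \mapsto g_{\bTheta,\bw}(\bx)$ is continuous on the compact product $C \times C^k \times [0,1]^p$, and therefore uniformly continuous there. The map sending $(\bx,\bTheta,\bw)$ to the vector of squared weighted distances is jointly continuous (polynomial in its entries) and lands in the compact cube $[0,B]^k$; composing with $M_s$ yields joint continuity, provided $M_s$ itself extends continuously to the boundary of $[0,\infty)^k$. Uniform continuity on the compact product then delivers the crucial equicontinuity in parameters, uniform over $\bx$: for each $\epsilon>0$ there is $\delta>0$ such that $|g_{\bTheta,\bw}(\bx) - g_{\bTheta',\bw'}(\bx)| < \epsilon$ for all $\bx \in C$ whenever $\|(\bTheta,\bw)-(\bTheta',\bw')\| < \delta$.

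Finally I would run the $\epsilon$-net argument. I cover the compact index space by finitely many $\delta$-balls centered at $(\bTheta^{(1)},\bw^{(1)}),\dots,(\bTheta^{(N)},\bw^{(N)})$; the ordinary SLLN applies to each of the $N$ bounded functions $g_{\bTheta^{(r)},\bw^{(r)}}$, and since $N$ is finite there is a single probability-one event on which all $N$ empirical averages converge. On that event, for arbitrary $(\bTheta,\bw)$ lying in ball $r$, I split $|\int g_{\bTheta,\bw}\,dP_n - \int g_{\bTheta,\bw}\,dP|$ into a center term (handled by the SLLN) plus two approximation terms, each bounded by $\epsilon$ via equicontinuity, using that $|\int (g_{\bTheta,\bw} - g_{\bTheta^{(r)},\bw^{(r)}})\,d\mu| \le \epsilon$ holds for both $\mu = P_n$ and $\mu = P$ since they are probability measures. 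Taking the supremum over $(\bTheta,\bw)$, then $n\to\infty$, then $\epsilon \downarrow 0$ yields the claim. The main obstacle is the continuity verification of the preceding paragraph, specifically confirming that the power mean composite remains continuous up to the boundary where a data point coincides with a centroid when $s<0$: there a vanishing coordinate forces $M_s \to 0$, so $M_s$ extends continuously to all of $[0,\infty)^k$, but this limiting behavior must be checked carefully since it is the one place the argument is not purely formal. Once that continuity is secured, the remainder is the standard compactness-plus-SLLN template.
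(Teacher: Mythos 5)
Your argument is correct and follows essentially the same route as the paper's: both exploit compactness of $C^k\times[0,1]^p$ and uniform continuity of $(\bx,\bTheta,\bw)\mapsto \mathcal{M}_s(\bx,\bTheta,\bw)$ to reduce the supremum to a finite $\delta$-net, the only cosmetic difference being that the paper packages the reduction as an $L_1(P)$-bracketing family $\dot{g}\le g\le\bar{g}$ with $\int(\bar{g}-\dot{g})\,dP<\epsilon$, while you apply the ordinary SLLN to the net centers directly and transfer via equicontinuity under both $P_n$ and $P$. Your explicit check that $M_s$ extends continuously to the boundary of $[0,\infty)^k$ when $s<0$ (a vanishing coordinate forces $M_s\to 0$) addresses a point the paper's bare assertion of continuity on the compact set silently assumes, and is worth recording.
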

\begin{proof}
Fix $\epsilon >0$. It is enough to find a finite family of functions $\mathcal{G}_\epsilon$ such that for all $g \in \mathcal{G}$, there exists $\Bar{g},\Dot{g} \in \mathcal{G}_\epsilon$ such that $\Dot{g} \leq g \leq  \Bar{g}$ and $\int(\Bar{g}-\Dot{g})dP<\epsilon$.\par
Let us define $\phi(\cdot): \mathbb{R} \to \mathbb{R}$ such that $\phi(x)=\max\{0,x\}$. Since $C$ is compact, for every $\delta_1>0$, we can always construct a finite set $C_{\delta_1} \subset C$ such that if $\btheta \in C$, there exist $\btheta' \in C_{\delta_1}$ such that $\|\btheta-\btheta'\|<\delta_1$. Similarly, resorting to the compactness of $[0,1]^p$, for every $\delta_2>0$, we can always construct a finite set $W_{\delta_2} \subset [0,1]^p$ such that if $\bw \in [0,1]^p$, there exist $\bw' \in W_{\delta_2}$ such that $\|\bw-\bw'\|<\delta_2$. Consider the function $h(\bx,\bTheta,\bw)=M_s(\|\bx-\btheta_1\|^2_{\bw},\dots,\|\bx-\btheta_k\|^2_{\bw})$ on $C\times C^k \times [0,1]^p$. $h$, being continuous on the compact set $C\times C^k \times [0,1]^p$, is also uniformly continuous. Thus for all $\mathbf{x} \in C$, if $\|\bw-\bw'\|<\delta_2$ and $\|\btheta_j-\btheta'_j\|<\delta_1$ for all $j=1,\dots,k$ implies that 
\begin{align}
\label{eqq1}
\bigg|M_s(\|\bx-\btheta_1\|^2_{\bw},\dots,\|\bx-\btheta_k\|^2_{\bw}) -M_s(\|\bx-\btheta_1'\|^2_{\bw'},\dots,\|\bx-\btheta'_k\|^2_{\bw'})\bigg|< \epsilon/2
\end{align}

We take 
\begin{align*}
\mathcal{G}_\epsilon & =\{\phi(M_s(\|\bx-\btheta_1'\|^2_{\bw'},\dots,\|\bx-\btheta'_k\|^2_{\bw'})\pm \epsilon/2)\\
&: \btheta'_1,\dots,\btheta'_k \in C_{\delta_1} \text{ and } \bw' \in   W_{\delta_2}\}. 
\end{align*}
Now if we take 

$$\Bar{g}_{\btheta,\bw}=\phi(M_s(\|\bx-\btheta_1'\|^2_{\bw'},\dots,\|\bx-\btheta'_k\|^2_{\bw'})+ \epsilon/2)$$

and 
$$\Dot{g}_{\btheta,\bw}=\phi(M_s(\|\bx-\btheta_1'\|^2_{\bw'},\dots,\|\bx-\btheta'_k\|^2_{\bw'})- \epsilon/2),$$
where $\btheta'_j \in C_{\delta_1}$ and $\bw \in W_{\delta_2}$ for $j=1,\dots,k$ such that $\|\btheta_j-\btheta'_j\|<\delta_1$ and $\|\bw-\bw'\|<\delta_2$. From equation (\ref{eqq1}), we get, $\Dot{g} \leq g \leq \Bar{g}$. Now we need to show $\int(\Bar{g}-\Dot{g})dP<\epsilon$. This step is straight forward. 
\begin{align*}
&\int(\Bar{g}-\Dot{g})dP\\
&= \bigg[ \phi(M_s(\|\bx-\btheta_1'\|^2_{\bw'},\dots,\|\bx-\btheta'_k\|^2_{\bw'})+ \epsilon/2)\\
& - \phi(M_s(\|\bx-\btheta_1'\|^2_{\bw'},\dots,\|\bx-\btheta'_k\|^2_{\bw'})- \epsilon/2)\bigg] dP\\
& \leq \epsilon \int dP = \epsilon.
\end{align*}
 Hence the result.
\end{proof}

We are now ready to establish the main consistency result, stated and proven below.
\begin{thm}
\label{main theorem}
Under the condition A1, $\bTheta_{n,s} \xrightarrow{a.s.} \bTheta^* $ and $\bw_{n,s} \xrightarrow{a.s.} \bw^*$ as $n \to \infty$ and $s \to -\infty$.
\end{thm}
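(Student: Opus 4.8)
The plan is to adapt Pollard's subsequence argument for $k$-means consistency to the doubly-indexed family of annealed, entropy-regularized objectives, using Theorems \ref{slln} and \ref{uniform} as the two uniform-approximation ingredients. Throughout I write $\Phi^s_n(\bTheta,\bw) = \int \mathcal{M}_s(\bx,\bTheta,\bw)\,dP_n + \lambda\sum_{l=1}^p w_l\log w_l$ for the empirical objective, $\Phi^s(\bTheta,\bw) = \int \mathcal{M}_s(\bx,\bTheta,\bw)\,dP + \lambda\sum_{l=1}^p w_l\log w_l$ for its population counterpart, and $\Phi$ for the limiting $k$-means objective defined above. By Theorem \ref{hull} every minimizer $(\bTheta_{n,s},\bw_{n,s})$ lies in the compact set $K := C^k\times\{\bw\in[0,1]^p:\sum_l w_l = 1\}$, so any sequence of minimizers has subsequential limits in $K$; it therefore suffices to show that every such limit equals $(\bTheta^*,\bw^*)$, after which compactness upgrades this to convergence of the whole sequence.

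Next I would record the two structural facts that drive the argument. First, the power-mean inequality gives $M_s(\by)\ge\min_j y_j$ for every $s$, so $\Phi^s(\bTheta,\bw)\ge\Phi(\bTheta,\bw)$ pointwise on $K$; moreover $\Phi$ is continuous on $K$, since its integrand $\min_j\|\bx-\btheta_j\|_{\bw}^2$ is continuous and uniformly bounded on the compact set $C\times K$ (dominated convergence), and the entropy term is continuous on the simplex with the convention $0\log 0 = 0$. Second, the two convergences $\sup_K|\Phi^s_n - \Phi^s|\to 0$ a.s.\ (Theorem \ref{slln}, the entropy term being deterministic) and $\sup_K|\Phi^s - \Phi|\to 0$ (Theorem \ref{uniform}, via Dini) control the two approximation gaps separately.

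The heart of the proof is a sandwich. Fix a coupling $s=s_n\downarrow-\infty$ with $s_1\le 1$ and pass to a subsequence along which $(\bTheta_{n,s_n},\bw_{n,s_n})\to(\bar{\bTheta},\bar{\bw})\in K$. For the upper bound, optimality of the empirical minimizer gives $\Phi^{s_n}_n(\bTheta_{n,s_n},\bw_{n,s_n})\le\Phi^{s_n}_n(\bTheta^*,\bw^*)$, whose right-hand side tends to $\Phi(\bTheta^*,\bw^*)$ (SLLN evaluated at the fixed point, then monotone convergence $\mathcal{M}_{s}\downarrow\min$). For the lower bound, $\Phi^{s_n}_n\ge\Phi^{s_n}-\sup_K|\Phi^{s_n}_n-\Phi^{s_n}|\ge\Phi-\sup_K|\Phi^{s_n}_n-\Phi^{s_n}|$ using $\Phi^s\ge\Phi$, while continuity of $\Phi$ gives $\Phi(\bTheta_{n,s_n},\bw_{n,s_n})\to\Phi(\bar{\bTheta},\bar{\bw})$. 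Combining the two bounds yields $\Phi(\bar{\bTheta},\bar{\bw})\le\Phi(\bTheta^*,\bw^*)$; since $(\bTheta^*,\bw^*)$ minimizes $\Phi$ this is an equality, and assumption A1 (well-separatedness) then forces $(\bar{\bTheta},\bar{\bw})=(\bTheta^*,\bw^*)$.

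The main obstacle is the coupling of the two limits: the fluctuation bound $\sup_K|\Phi^{s_n}_n - \Phi^{s_n}|\to 0$ is needed when $s_n$ varies with $n$, whereas Theorem \ref{slln} delivers it only for each fixed $s$. I would close this gap by strengthening the bracketing in Theorem \ref{slln} to be uniform in $s$ as well: extend $M_s$ continuously to $s=-\infty$ by $M_{-\infty}=\min$, observe that $M_s\to\min$ monotonically and hence (by Dini) uniformly on the compact argument set, so that $h(\bx,\bTheta,\bw,s)$ is uniformly continuous on $C\times K\times([-\infty,1])$ with the compactified parameter interval. The identical finite $\epsilon$-bracket construction then applies simultaneously over $(\bTheta,\bw,s)$, giving $\sup_{s\le 1}\sup_K|\Phi^s_n-\Phi^s|\to0$ a.s.\ and validating the sandwich for any coupling $s_n\downarrow-\infty$; everything else reduces to routine continuity and compactness bookkeeping.
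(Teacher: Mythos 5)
Your proposal is correct and rests on the same three pillars as the paper's proof: optimality of the empirical minimizer, the uniform SLLN of Theorem \ref{slln}, the uniform convergence $M_s \to \min$ of Theorem \ref{uniform}, and finally assumption A1 to convert value-convergence into parameter-convergence. The bookkeeping differs: the paper never passes to subsequences or invokes continuity of $\Phi$; instead it bounds $\Phi(\bTheta_{n,s},\bw_{n,s})-\Phi(\bTheta^*,\bw^*)$ directly by the three-term telescope $\xi_1+\xi_2+\xi_3$ (population-vs-annealed gap at the empirical minimizer, empirical-vs-population gap, and the optimality term) and shows each is at most $\eta/3$, so that A1 applies immediately without identifying limit points. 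Your subsequence-plus-sandwich route needs the additional (easy) facts that $\Phi$ is continuous on $K$ and that $M_s\ge\min$, but is otherwise equivalent. The one place where you genuinely add something is the coupling issue: the paper's bound on $\xi_2=\int\mathcal{M}_s\,dP-\int\mathcal{M}_s\,dP_n$ cites Theorem \ref{slln}, which is proved for each \emph{fixed} $s\le 1$, yet the theorem asserts a joint limit in $(n,s)$ and the stated quantifiers require an $M_2$ valid for all $s<M_1$ simultaneously; your strengthening of the bracketing to be uniform over the compactified interval $[-\infty,1]$ (or, even more directly, the elementary bound $\min(\by)\le M_s(\by)\le k^{-1/s}\min(\by)$ for $s<0$, which gives uniformity in $s$ for free) closes a gap that the paper's own proof leaves implicit. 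So: same skeleton, a slightly heavier but more careful execution on your side.
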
 
\begin{proof}
It is enough to show that given any neighbourhood $N$ of $(\bTheta^*,\bw^*)$, there exists $M_1<0$ and $M_2>0$ such that if $s<M_1$ and $n>M_2$ such that $(\bTheta, \bw) \in N$ almost surely. By assumption A1, it is enough to show that for all $\eta>0$, there exists $M_1<0$ and $M_2>0$ such that if $s<M_1$ and $n>M_2$ such that $\Phi(\bTheta, \bw)\leq  \Phi(\bTheta^*, \bw^*)+\eta$ almost surely. For notational convenience, we write $\mathcal{M}_s(\bx,\bTheta,\bw$) for  $M_s(\|\bx-\btheta_1\|^2_{\bw},\dots,\|\bx-\btheta_k\|^2_{\bw})$ and $\alpha(\bw)=\lambda \sum_{l=1}^pw_l \log w_l$. Now since $(\bTheta_{n,s},\bw_{n,s})$ is the minimizer for $\int \mathcal{M}_s(\bx,\bTheta,\bw)dP_n + \lambda \sum_{l=1}^pw_l \log w_l$, we get,
\begin{align}
\label{eqq2}
&\int \mathcal{M}_s(\bx,\bTheta_{n,s},\bw_{n,s})dP_n +\lambda \alpha(\bw_{n,s}) \nonumber \\
& \leq \int \mathcal{M}_s(\bx,\bTheta^*,\bw^*)dP_n + \lambda \alpha(\bw^*).
\end{align}
Now observe that $\Phi(\bTheta_{n,s},\bw_{n,s})-\Phi(\bTheta^*,\bw^*)=\xi_1+\xi_2+\xi_3$, where,
\begin{align*}
\xi_1 & =\Phi(\bTheta_{n,s},\bw_{n,s})-\int \mathcal{M}_s(\bx,\bTheta_{n,s},\bw_{n,s})dP-\lambda \alpha(\bw_{n,s}),\\
\xi_2 & =  \int \mathcal{M}_s(\bx,\bTheta_{n,s},\bw_{n,s})dP-\int \mathcal{M}_s(\bx,\bTheta_{n,s},\bw_{n,s})dP_n,\\
\xi_3 & = \int \mathcal{M}_s(\bx,\bTheta_{n,s},\bw_{n,s})dP_n+\lambda \alpha(\bw_{n,s})- \Phi(\bTheta^*,\bw^*).
\end{align*} 
We first choose $M_1<0$ such that if $s<M_1$ then 
\begin{equation}
    \label{eqq3}
    \bigg|\min_{1 \leq j \leq k} \|\bx-\btheta_j\|_{\bw}-\mathcal{M}_s(\bx,\bTheta,\bw)\bigg|<\eta/6
\end{equation}
 for all $\bx \in C$, $\bTheta \in C^k$ and $\bw \in [0,1]^p$. Thus for $s<M_1$, $\min_{1 \leq j \leq k} \|\bx-\btheta_j\|_{\bw}\leq \mathcal{M}_s(\bx,\bTheta,\bw)+\eta/6$ which in turn implies that $ \int\min_{1 \leq j \leq k} \|\bx-\btheta_j\|_{\bw}dP_n\leq \int\mathcal{M}_s(\bx,\bTheta,\bw)dP_n+\eta/3$. Substituting $\bTheta_{n,s}$ for $\Theta$ and $\bw_{n,s}$ for $\bw$ in the above expression and adding $\lambda \alpha(\bw_{n,s})$ to both sides, we get $\xi_1<\eta/6$. We also observe that the quantity $\xi_2$ can also be made smaller that $\eta/3$ by appealing to the uniform SLLN (Theorem 3). Now to bound $\xi_3$, we observe that 
\begin{align*}
\xi_3 & \leq \int \mathcal{M}_s(\bx,\bTheta^*,\bw^*)dP_n+\lambda \alpha(\bw^*)- \Phi(\bTheta^*,\bw^*)\\
& = \int \mathcal{M}_s(\bx,\bTheta^*,\bw^*)dP_n - \int \min_{\btheta \in \bTheta^*} \|\bx-\btheta\|_{\bw^*}dP
\end{align*} 
This inequality is obtained by appealing to equation (\ref{eqq2}). Again appealing to the uniform SLLN, we get that for large enough $n$,
\begingroup
\allowdisplaybreaks
\begin{align*}
    \xi_3 & \leq \int \mathcal{M}_s(\bx,\bTheta^*,\bw^*)dP - \int \min_{\btheta \in \bTheta^*}\|\bx-\btheta\|_{\bw^*}dP+\eta/6\\
    & \leq \int [\min_{\btheta \in \bTheta^*}\|\bx-\btheta\|_{\bw^*}+\eta/6]dP - \int \min_{\btheta \in \bTheta^*}\|\bx-\btheta\|_{\bw^*}dP\\
    & + \eta/6= \eta/3.
\end{align*}
\endgroup
The second inequality follows from equation (\ref{eqq3}). Thus we get, $\Phi(\bTheta_{n,s},\bw_{n,s})-\Phi(\bTheta^*,\bw^*)=\xi_1+\xi_2+\xi_3 \leq \eta/6+\eta/3+\eta/3 < \eta$ almost surely. The result now follows. 
\end{proof}
\section{Empirical Performance}
\label{experiment}
We examine the performance of EWP on a variety of simulated and real datasets compared to classical and state-of-the-art peer algorithms. All the datasets and the codes pertaining to this paper are publicly available at \url{https://github.com/DebolinaPaul/EWP}. For evaluation purposes, we use the Normalized Mutual Information (NMI) \citep{vinh2010information} between the ground-truth partition and the partition obtained by each algorithm. A value of 1 indicates perfect clustering and a value of 0 indicates arbitrary labels. As our algorithm is meant to perform as a drop-in replacement to $k$-means, we focus comparisons to Lloyd's classic algorithm \citep{lloyd1982least}, $WK$-means \citep{huang2005automated}, Power $k$-means \citep{xu2019power} and sparse $k$-means \citep{witten2010framework}. It should be noted that sparse $k$-means already entails higher computational complexity, and we do not exhaustively consider alternate methods which require orders of magnitude of higher complexity. In all cases, each algorithm is  initiated with the same set of randomly chosen centroids.
\subsection{Synthetic Experiments}
\label{simulation}
We now consider a suite of simulation studies to validate the proposed EWP  algorithm. 
\paragraph{Simulation 1}
\label{simulation 1}
\begin{figure}[ht]
    \centering
    \hspace{-17pt}
     \begin{subfigure}[b]{0.34\textwidth}
    \centering
        \includegraphics[height=\textwidth,width=\textwidth]{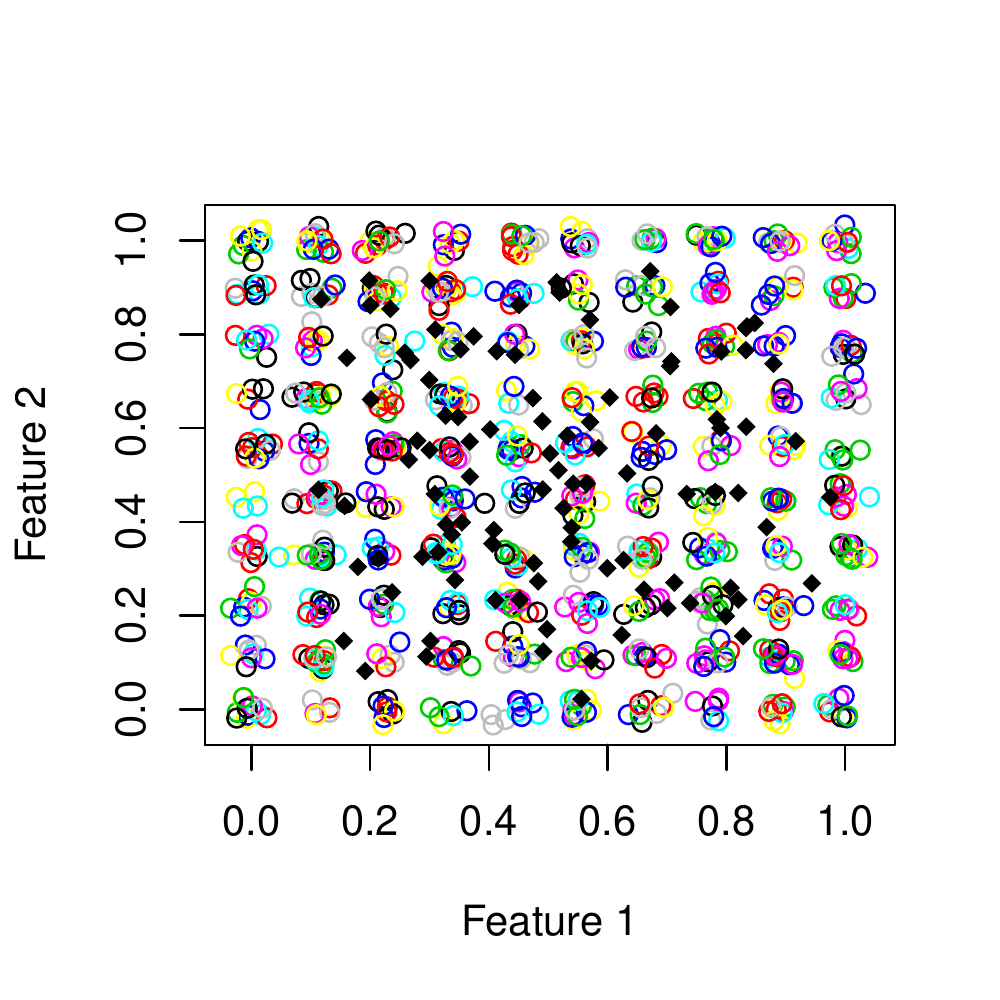}
        \caption{$k$-means}
    \end{subfigure}
    ~
    \hspace{-17pt}
 \begin{subfigure}[b]{0.34\textwidth}
    \centering
        \includegraphics[height=\textwidth,width=\textwidth]{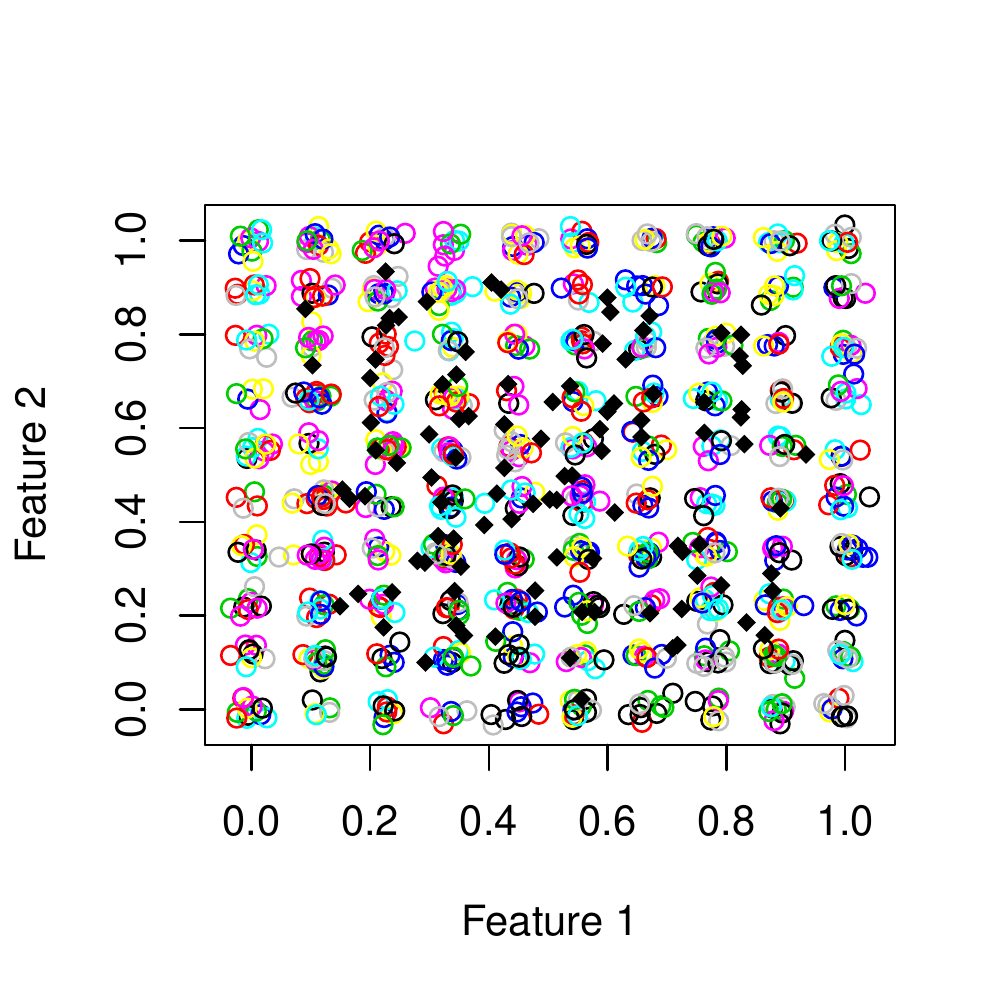}
        \caption{$WK$-means}
    \end{subfigure}
    ~
    \hspace{-17pt}
     \begin{subfigure}[b]{0.34\textwidth}
    \centering
        \includegraphics[height=\textwidth,width=\textwidth]{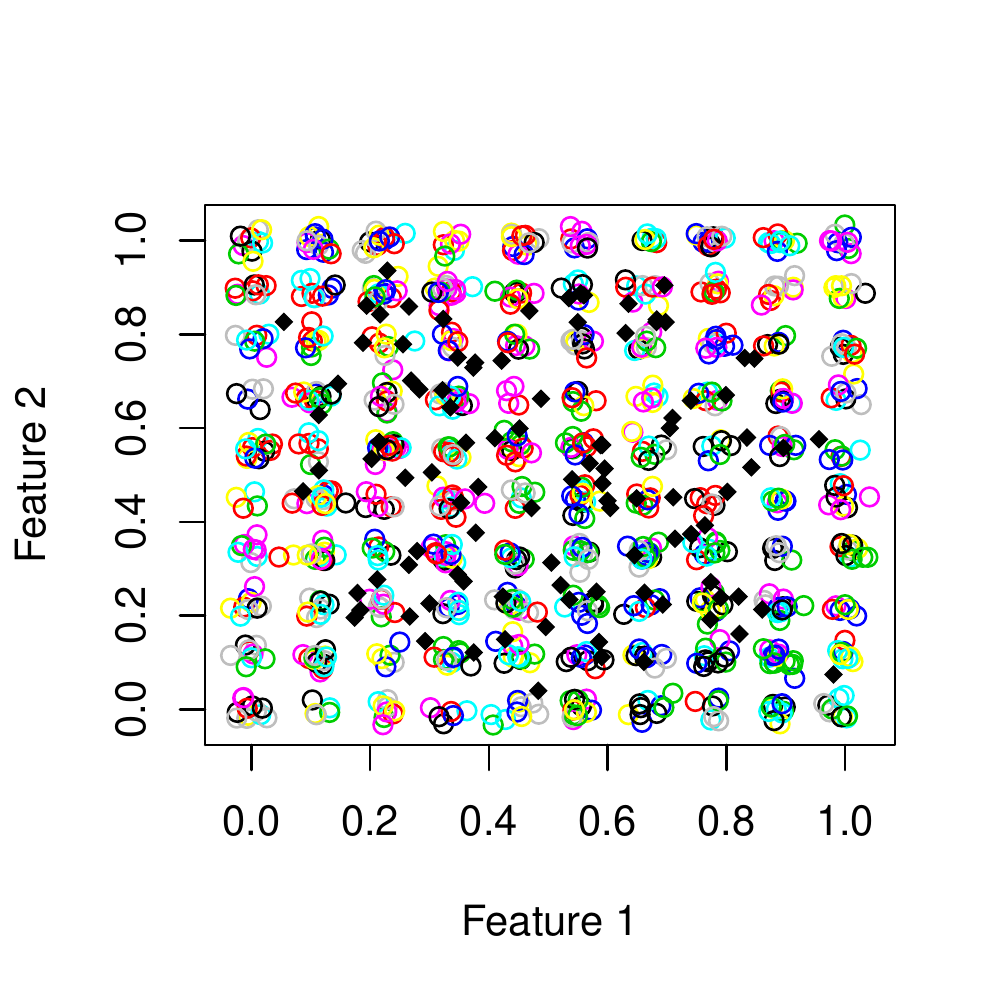}
        \caption{Power $k$-means}
    \end{subfigure}
    ~
    \hspace{-17pt}
     \begin{subfigure}[b]{0.34\textwidth}
    \centering
        \includegraphics[height=\textwidth,width=\textwidth]{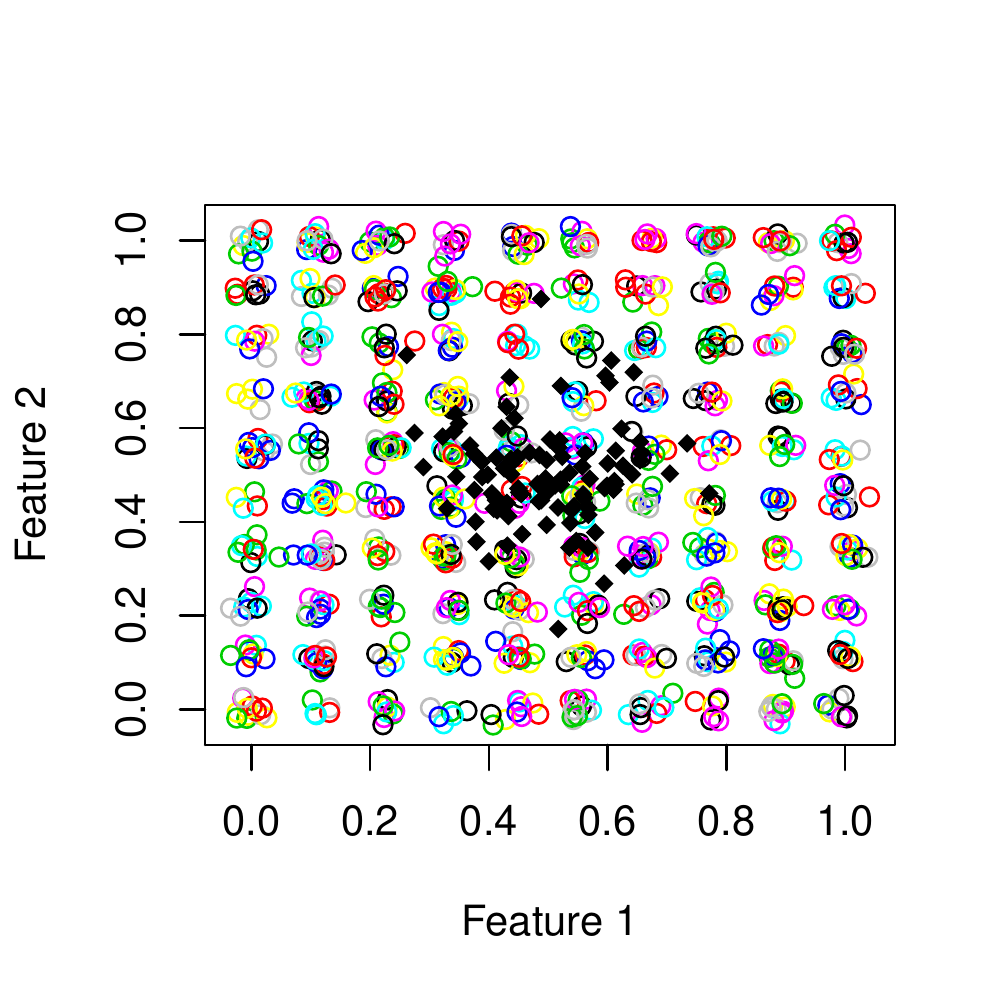}
        \caption{Sparse $k$-means}
    \end{subfigure}
    ~
    \hspace{-17pt}
 \begin{subfigure}[b]{0.34\textwidth}
    \centering
        \includegraphics[height=\textwidth,width=\textwidth]{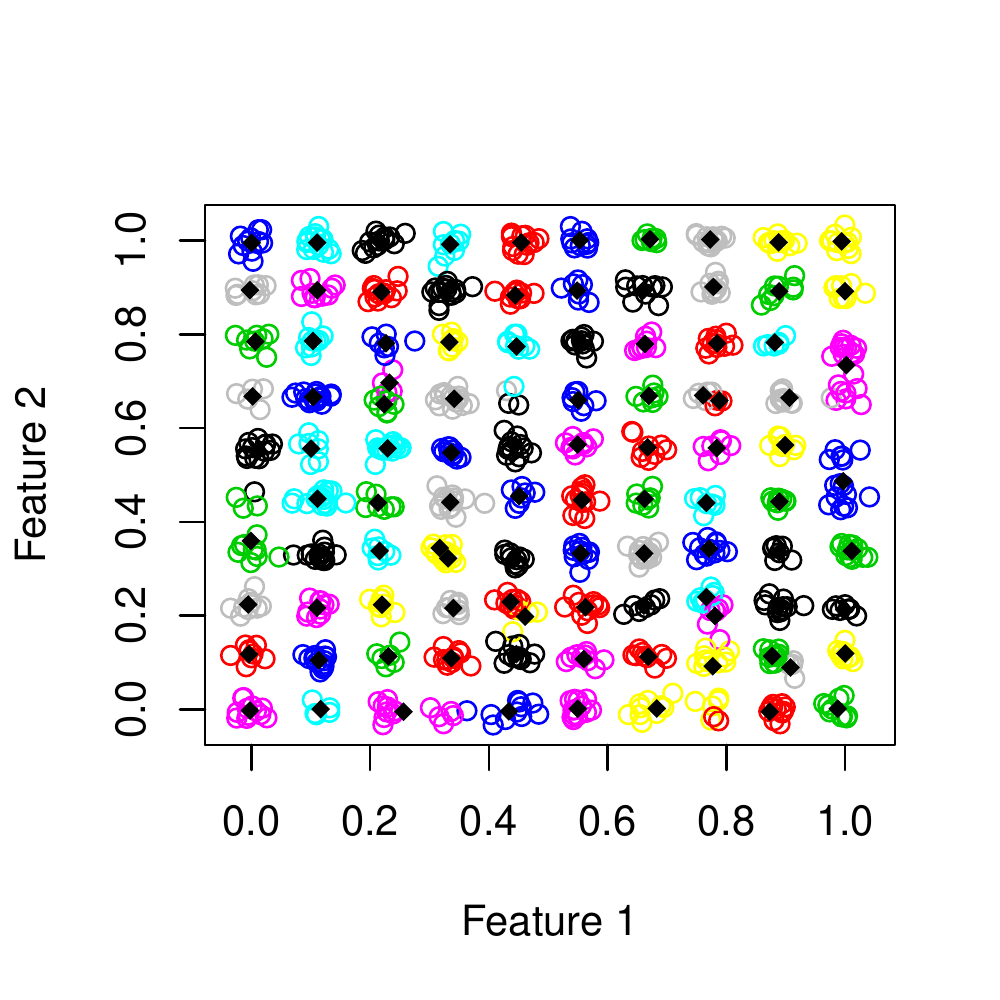}
        \caption{EWP}
    \end{subfigure}
            \caption{Solutions obtained by the peer algorithms for an example dataset with $k=100$ and $d=20$ in Simulation 1 (\ref{simulation 1}). The obtained cluster centroids appear as black diamonds in the figure. }
        \label{fig: sim1}
\end{figure}
The first experiment assesses performance as the dimension and number of uninformative features grows. We generate $n=1000$ observations with $k=100$ clusters. Each observation has $p=d+2$ many features as $d$ varies between $5$ and $100$. The first two features reveal cluster structure, while the remaining $d$ variables 
are uninformative, generated independently from a $Unif(0,2)$ distribution. True centroids are spaced uniformly on a grid with $\theta_m=\frac{m-1}{10}$, and $x_{ij} \sim \frac{1}{10}\sum_{m=1}^{10} \mathcal{N}(\theta_m,0.15)$. Despite the simple data generating setup, clustering is difficult due to the low signal to noise ratio in this setting. \par
We report the average NMI values between the ground-truth partition and the partition obtained by each of the algorithms over $20$ trials in Table \ref{tab:s1}, with the standard deviations appearing in parentheses. 
The best performing algorithm in each column appears in bold, and the best solutions for $d=20$ are plotted in Figure \ref{fig: sim1}. The benefits using EWP are visually stark, and Table \ref{tab:s1} verifies in detail that EWP outperforms the classical $k$-means algorithm as well as the state-of-the-art sparse-$k$-means and the power $k$-means algorithms. The inability of $k$-means and Power $k$-means to properly learn the feature weights results in poor performance of these algorithms. On the other hand, although $WK$-means and sparse $k$-means can select features successfully, they fail from the optimization perspective when $k$ is large enough so that there are many local minima to trap the algorithm.  
 \begin{table}
\caption{NMI values for Simulation 1, showing the effect of the number of unimportant features.}
    \label{tab:s1}
    \centering
    \begin{tabular}{|c|c|c|c|c|c|}
        \hline
         &$d=5$ & $d=10$ & $d=20$ & $d=50$ & $d=100$  \\
         \hline 
        $k$-means & 0.3913 (0.002) & 0.3701 (0.002)  & 0.3674 (0.003) &  0.3629(0.002) & 0.3517 (0.003) \\
        \hline
        $WK$-means & 0.5144(0.002) & 0.50446(0.003) & 0.5050(0.003) & 0.5026(0.005) & 0.5029(0.003) \\
        \hline
        Power $k$-means & 0.3924(0.001) & 0.3873(0.002) & 0.3722 (0.001) & 0.3967 (0.003) & 0.3871 (0.004)\\
        \hline
        Sparse $k$-means & 0.3679 (0.002) & 0.3677 (0.002) & 0.3668 (0.001) & 0.3675 (0.002) & 0.3637 (0.002)\\
        \hline
        EWP-$k$-means & \textbf{0.9641} (0.001) & \textbf{0.9217} (0.001) &  \textbf{0.9139} (0.001) & \textbf{0.9465} (0.001)  & \textbf{0.9082} (0.003)\\
        \hline
    \end{tabular}
\end{table}
\paragraph{Simulation 2}
\label{set2}
We next examine the effect of $k$ on the performance, taking $n=100 \cdot k$ and $p=100$ while $k$ varies from $20$ to $500$. The matrix $\Theta_{k \times p}$, whose rows contain the cluster centroids, is generated as follows.
\begin{enumerate}
\item Select $5$ relevant features $l_1,\dots,l_5$ at random.
\item Simulate $\theta_{j,l_m} \sim Unif(0,1)$ for all $j=1,\dots,k$ and $m=1,\dots,5$.
\item Set $\theta_{j,l}=0$ for all $l \not\in \{l_1,\dots,l_5\}$ and all $j$.
\end{enumerate}
After obtaining $\Theta$, $x_{il}$ is simulated as follows.
\begin{align*}
x_{il}\sim& \mathcal{N}(0,1) \text{ if }   l \not\in \{l_1,\dots,l_5\}\\
x_{il}\sim& \frac{1}{k}\sum_{j=1}^k\mathcal{N}(\theta_{j,l},0.015) \text{ if }   l \in \{l_1,\dots,l_5\}.
\end{align*}
We run each of the algorithms 20 times and report the average NMI values between the ground-truth partition an the partition obtained by each of the algorithms in Table \ref{tab:s2}; with standard errors appearing in parentheses. 
Table \ref{tab:s2} shows that $k$-means, $WK$-means, power $k$-means, and sparse $k$-means lead to almost the same result, while EWP outperforms all the peer algorithms for each $k$ as it narrows down the large number of features and avoids local minima from large $k$ simultaneously. 
\begin{table}
\caption{NMI values for Simulation 2, showing the effect of increasing number of clusters.}
    \label{tab:s2}
    \centering
    \begin{tabular}{|c|c|c|c|c|}
        \hline
         &$k=20$ &$k=100$ & $k=200$ & $k=500$  \\
         \hline 
        $k$-means & 0.0674(0.001) & 0.2502(0.021) & 0.3399 (0.031) &  0.3559 (0.014) \\
        \hline
        $WK$-means & 0.0587(0.001) & 0.2247(0.002) & 0.3584(0.018) & 0.3678(0.009)\\
        \hline
        Power $k$-means &0.0681(0.001) & 0.2785(0.001) & 0.3578 (0.002) & 0.3867(0.001) \\
        \hline
        Sparse $k$-means & 0.0679(0.001) & 0.2490(0.058) & 0.6705(0.007) & 0.3537 (0.002)\\
        \hline
        EWP-$k$-means & \textbf{0.9887}(0.001) & \textbf{0.9844} (0.002) &  \textbf{0.9756}(0.001) & \textbf{0.9908} (0.001) \\
        \hline
    \end{tabular}
\end{table}
\paragraph{Feature Selection}
We now examine the feature weighting properties of the EWP algorithm more closely. We take $n=1000$, $p=20$ and follow the same data generation procedure  described in Simulation 2. For simplicity, in the first step of the simulation study, we select $l_i=i$ for $i=1,\dots,5$. We record the feature weights obtained by EWP, sparse $k$-means and $WK$-means over $100$ replicate datasets.  The box-plot for these 100 optimal feature weights are shown in Figure \ref{boxplot} for all the three algorithms. The proposed method successfully assigns almost all  weight to relevant features 1 through 5, even though it does not make use of a sparsity-inducing penalty. Meanwhile, feature weights assigned by sparse $k$-means do not follow any clear pattern related to informative features, even though the ground truth is sparse in relevant features. The analogous plot for $WK$-means shows even worse performance than sparse $k$-means. The study clearly illustrates the necessity of feature weighing together with annealing for successful $k$-means clustering in high dimensional settings. 
\begin{figure}
    \centering
    \hspace{-10pt} 
    \begin{subfigure}[b]{0.34\textwidth}
    \centering
        \includegraphics[height=\textwidth,width=\textwidth]{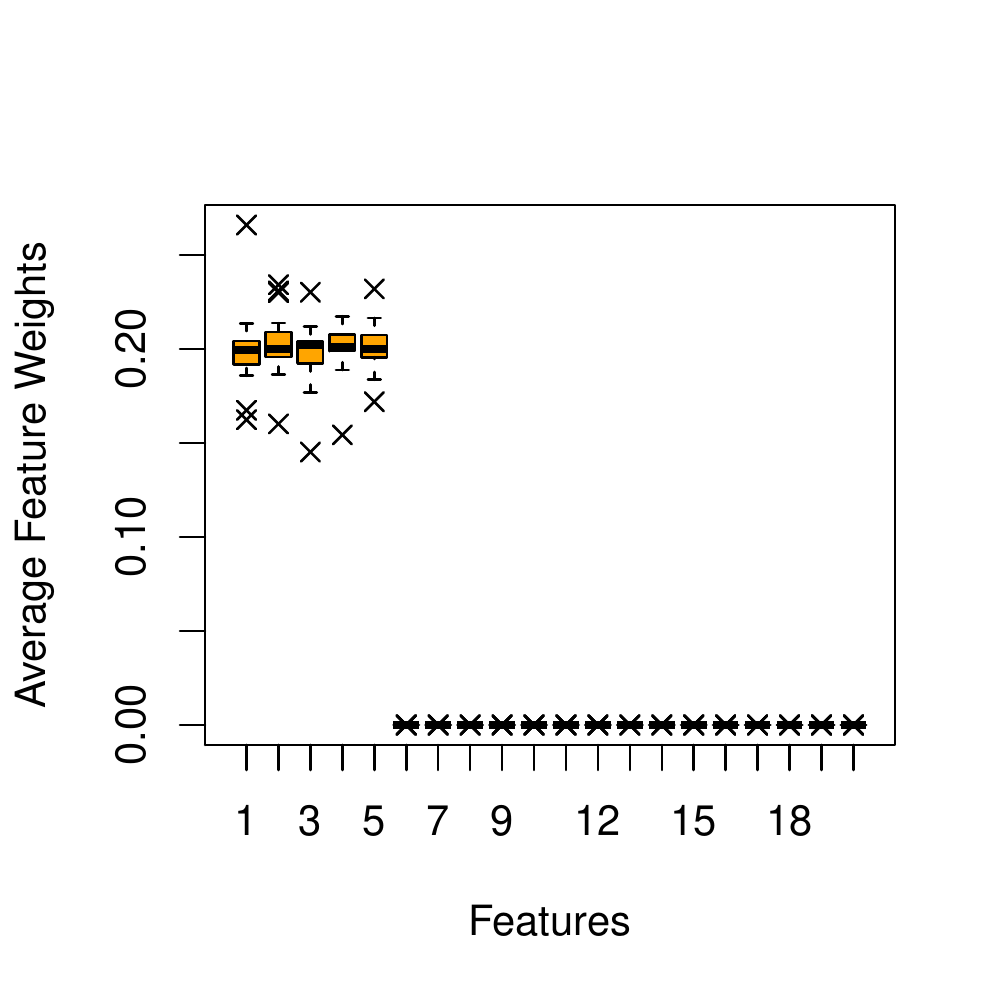}
        \caption{EWP}
    \end{subfigure}
    ~
    \hspace{-17pt}
     \begin{subfigure}[b]{0.34\textwidth} 
    \centering
        \includegraphics[height=\textwidth,width=\textwidth]{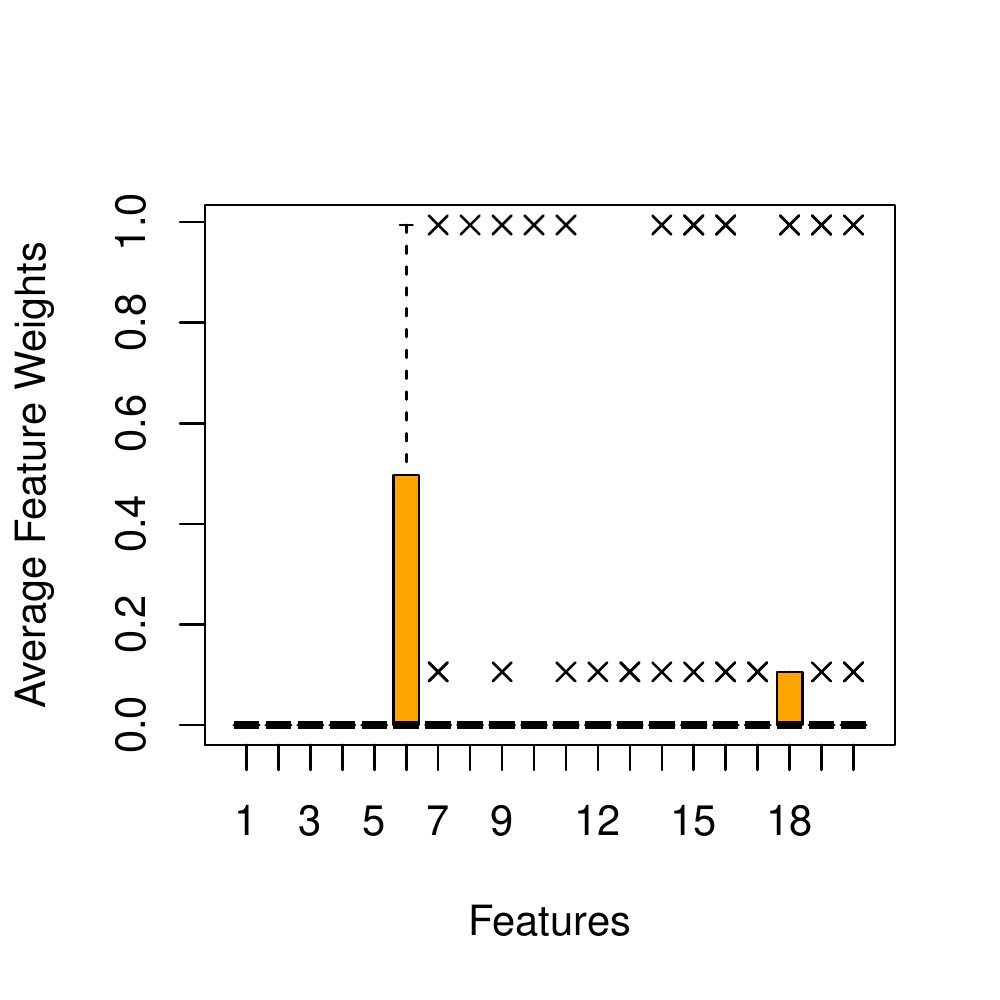}
        \caption{Sparse $k$-means}
    \end{subfigure}
    ~
    \hspace{-17pt}
    \begin{subfigure}[b]{0.34\textwidth} 
    \centering
        \includegraphics[height=\textwidth,width=\textwidth]{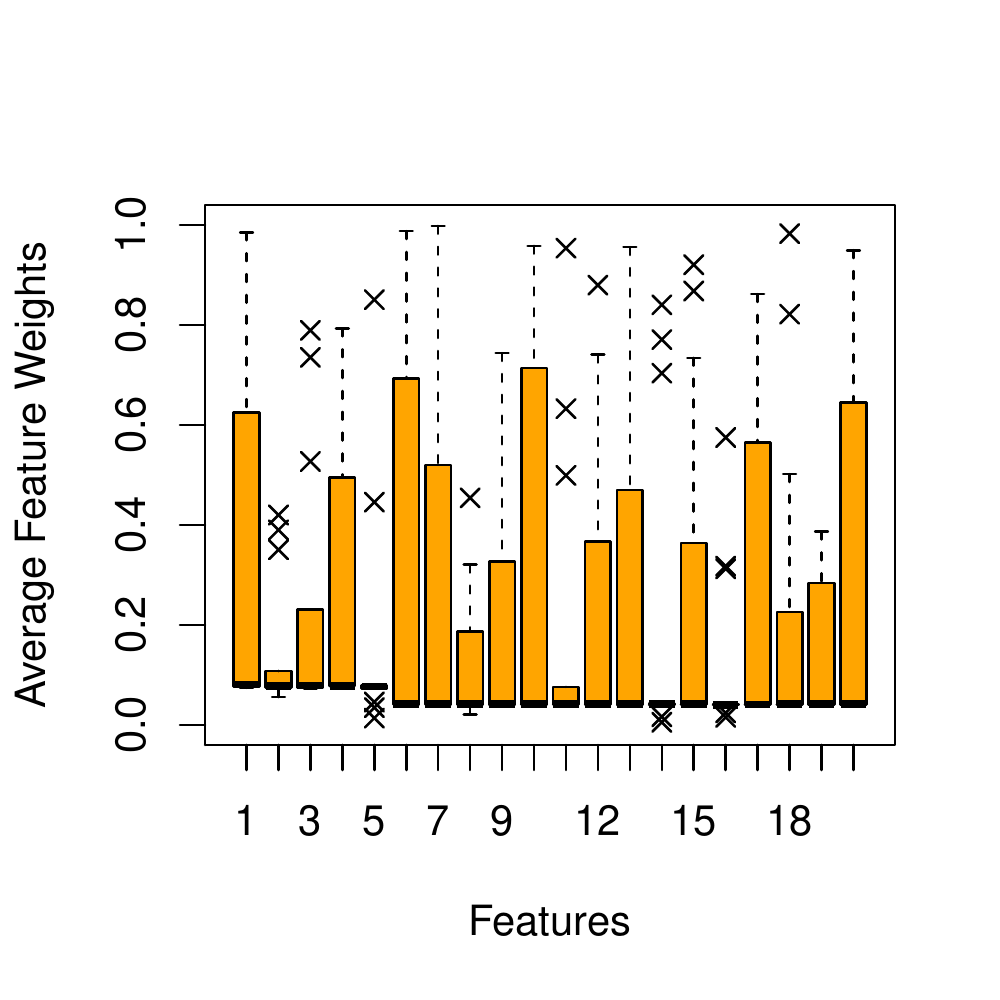}
        \caption{$WK$-means}
    \end{subfigure}

            \caption{Boxplots show that EWP consistently identifies true features while sparse $k$-means fails to do so. }
        \label{boxplot}
\end{figure}

\begin{figure}[ht!]
    \centering
    \hspace{-12pt} 
    \begin{subfigure}[b]{0.34\textwidth}
    \centering
        \includegraphics[height=\textwidth,width=\textwidth]{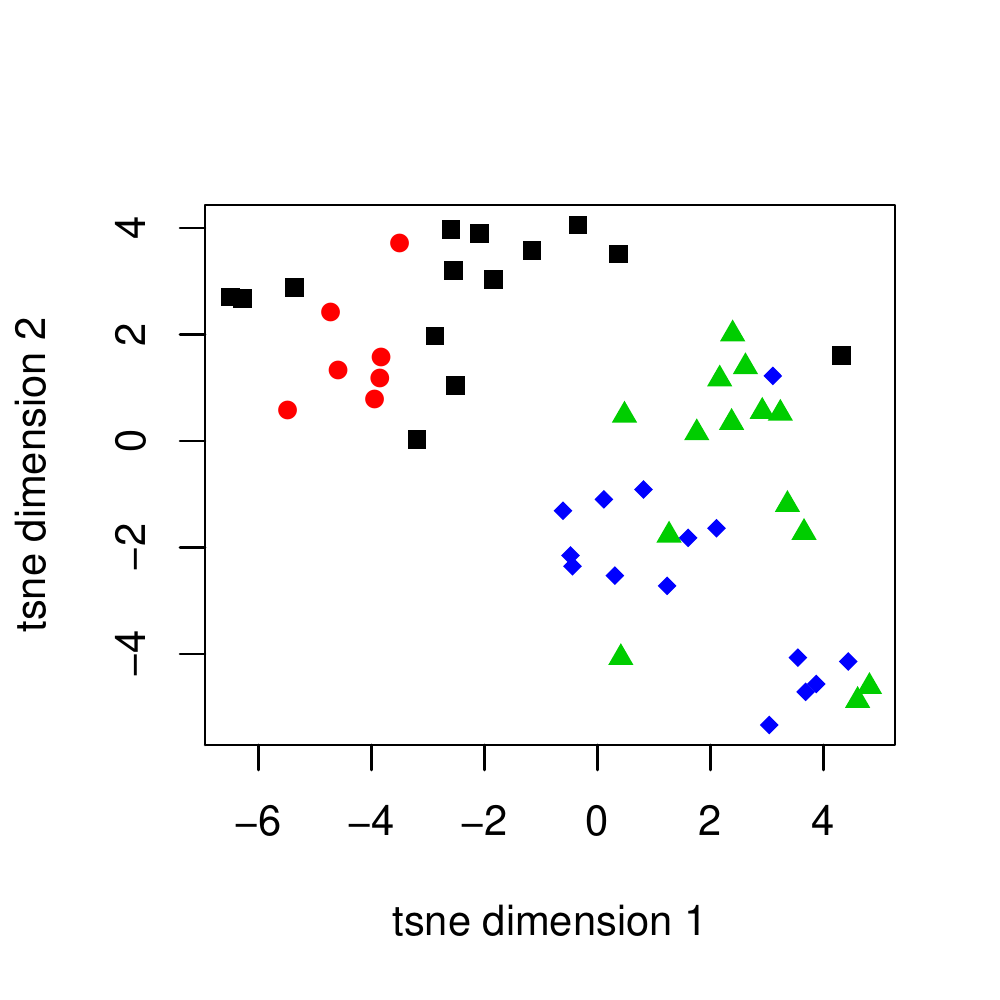}
        \caption{Ground Truth}
    \end{subfigure}
    ~
    \hspace{-17pt}\begin{subfigure}[b]{0.34\textwidth}
    \centering
        \includegraphics[height=\textwidth,width=\textwidth]{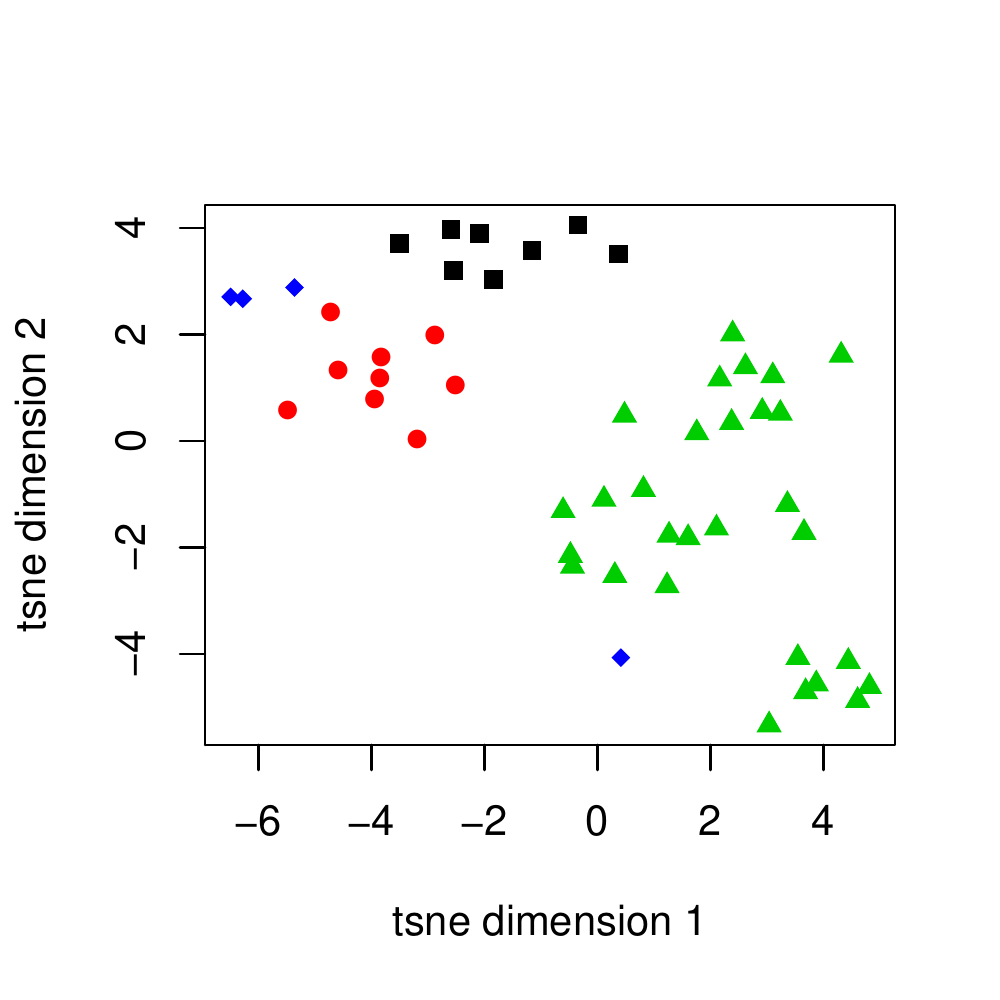}
        \caption{$k$-means}
    \end{subfigure}
    \hspace{-12pt} 
    \begin{subfigure}[b]{0.34\textwidth}
    \centering
        \includegraphics[height=\textwidth,width=\textwidth]{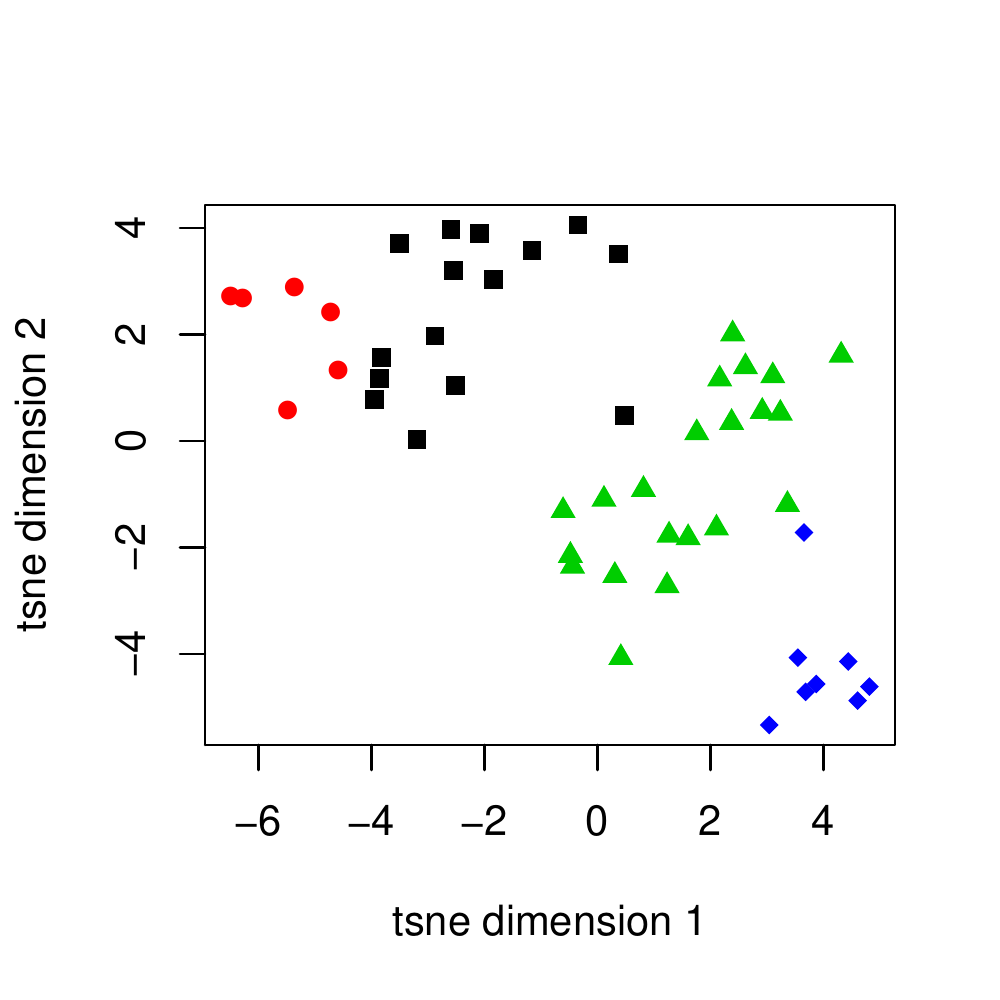}
        \caption{$WK$-means}
    \end{subfigure}
    ~
    \hspace{-17pt} \begin{subfigure}[b]{0.34\textwidth}
    \centering
        \includegraphics[height=\textwidth,width=\textwidth]{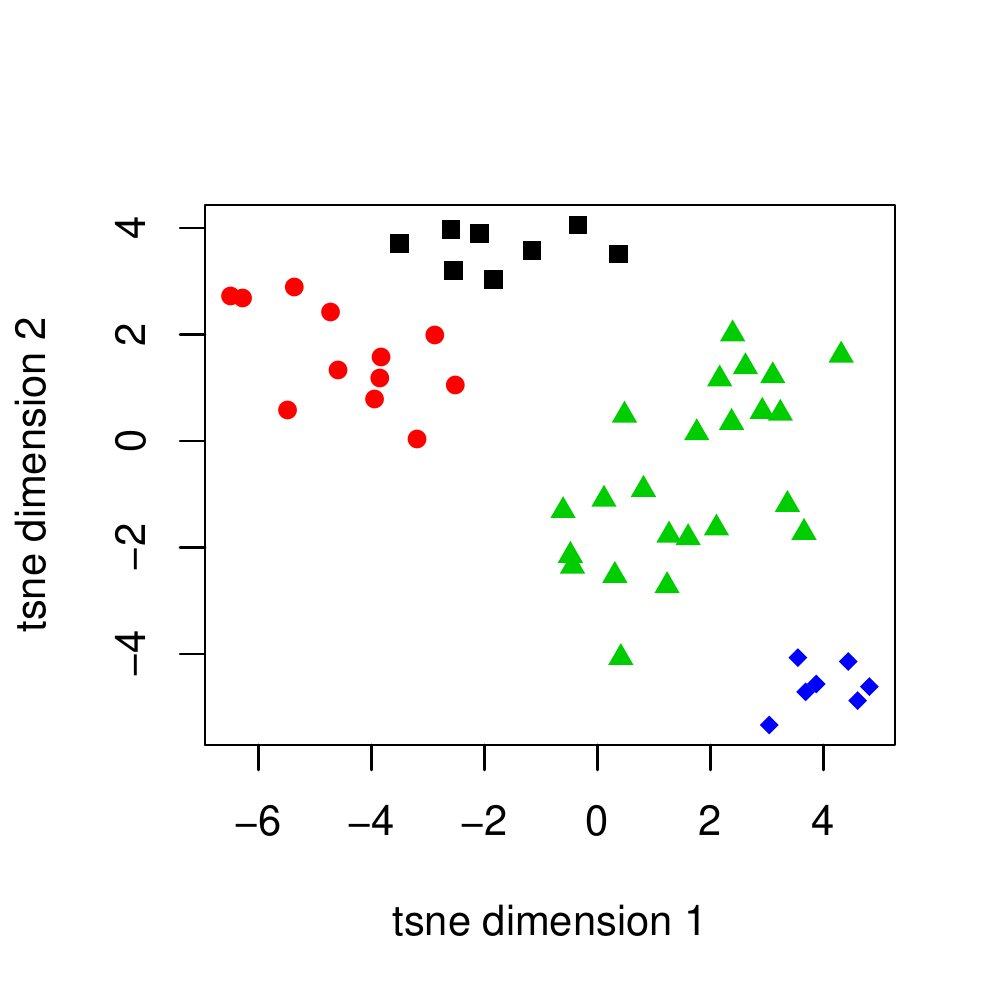}
        \caption{Power $k$-means}
    \end{subfigure}
    ~
    \hspace{-17pt}  \begin{subfigure}[b]{0.34\textwidth}
    \centering
        \includegraphics[height=\textwidth,width=\textwidth]{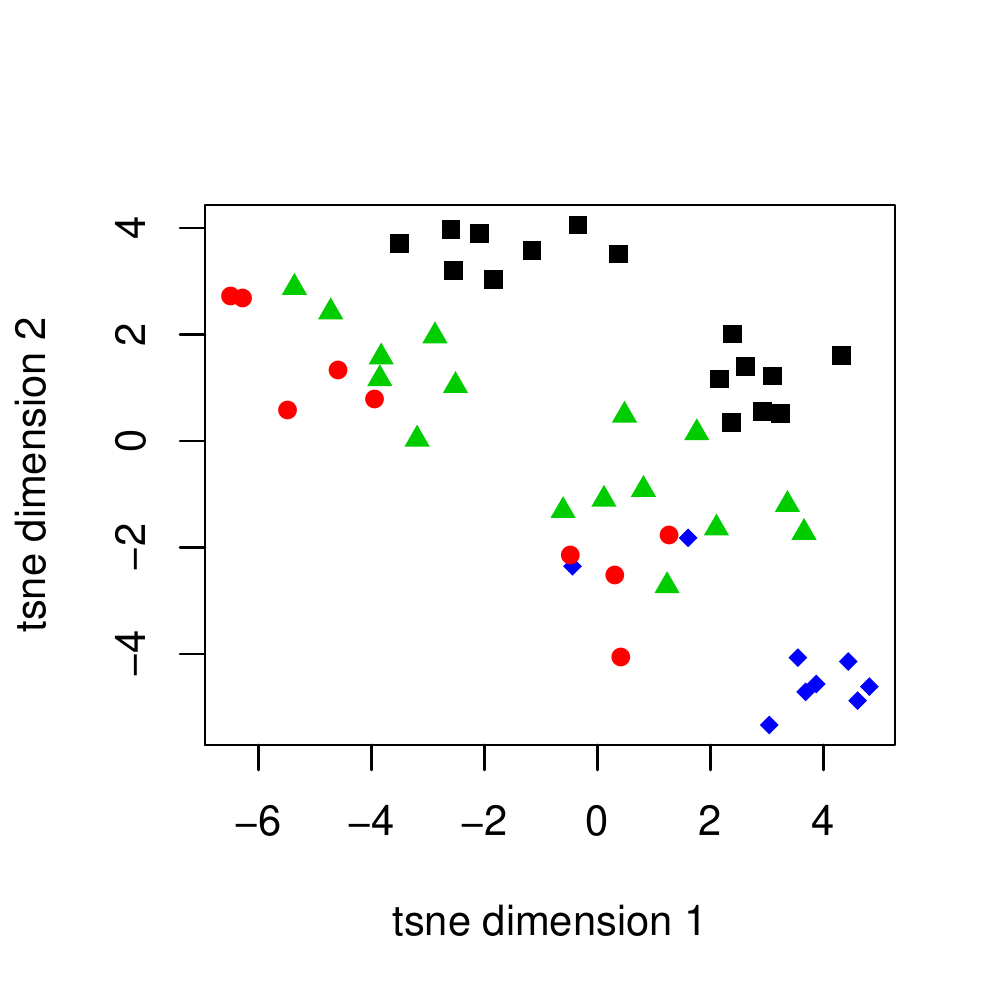}
        \caption{Sparse $k$-means}
    \end{subfigure}
    ~
    \hspace{-17pt}  \begin{subfigure}[b]{0.34\textwidth}
    \centering
        \includegraphics[height=\textwidth,width=\textwidth]{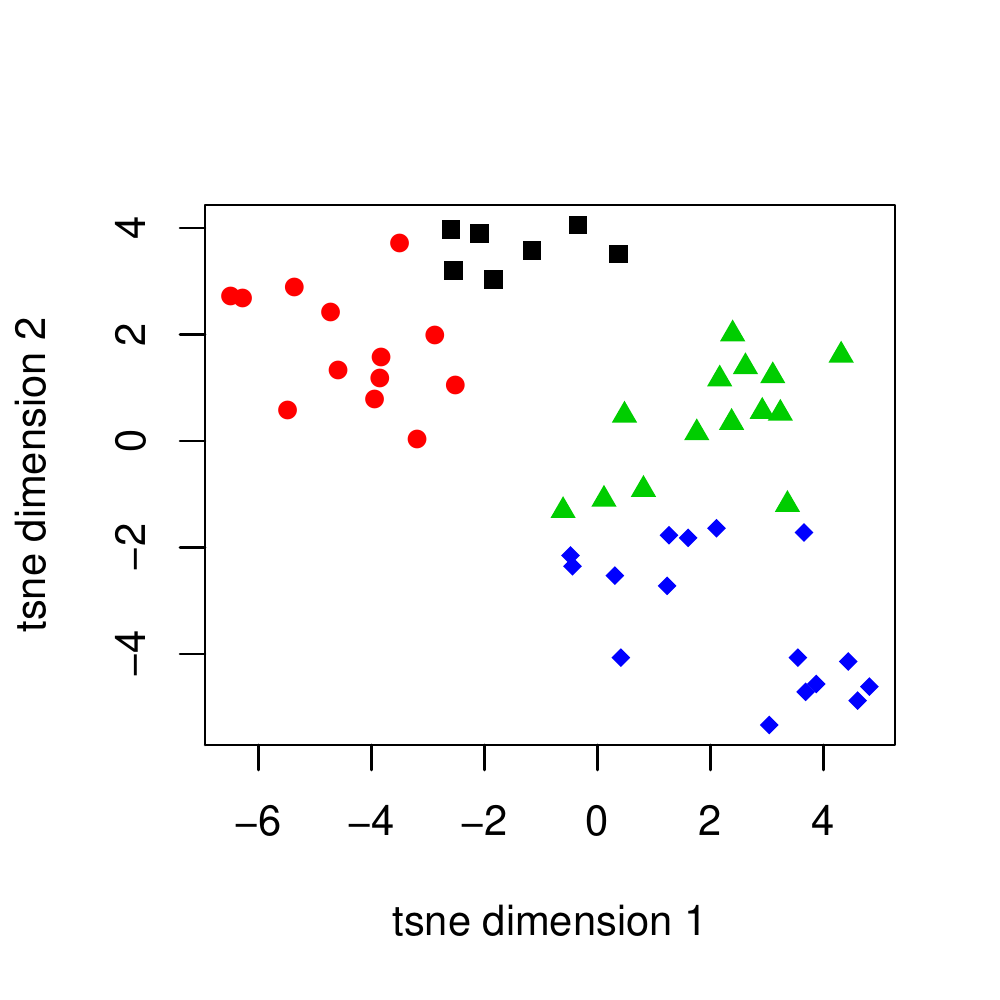}
        \caption{EWP}
    \end{subfigure}
        \caption{\texttt{t-SNE} plots for the GLIOMA dataset, color-coded by the partitioning obtained at convergence by each peer algorithm.}
        \label{tsne_glioma}
\end{figure}

\subsection{Case Study and Real Data}
\label{glioma}
We now assess performance on real data, beginning with a case study on Glioma. The GLIOMA dataset consists of 50 datapoints and is divided into 4 classes consisting of cancer glioblastomas (CG), noncancer glioblastomas (NG), cancer oligodendrogliomas (CO) and non-cancer oligodendrogliomas (NO). Each observation consists of 4434 features. The data were collected in the study by \cite{nutt2003gene}, and are also available by \cite{li2018feature}.
In our experimental studies, we compare the EWP algorithm to the four peer algorithms considered in Section \ref{simulation}. In order to visualize  clustering solutions, we embed the data into the plane via \texttt{t-SNE}  \citep{maaten2008visualizing}. The best partitioning obtained from each algorithm is shown in Figure \ref{tsne_glioma}, which makes it visually clear that clustering under EWP more closely resembles the ground truth compared to competitors.
This is detailed by average NMI values as well as standard deviations in parentheses listed in Table \ref{tab_glioma}. 

\begin{table}[h]
    \caption{Mean NMI and standard deviation, GLIOMA}
    \label{tab_glioma}
    \centering
    \begin{tabular}{|C{2.3cm}|C{2.3cm}|C{2.3cm}|C{2.3cm}|C{2.3cm}|}
       \hline
       $k$-means  & $WK$-means &Power  & Sparse & EWP\\
       \hline
       0.490 (0.040) & 0.427 (0.034) & 0.499 (0.020) & 0.108 (0.001) & \textbf{0.594} (0.001)\\
       \hline
    \end{tabular}
\end{table}
\paragraph{Further real-data experiments}
To further validate our method in various real data scenarios, we perform a series of experiments on 10 benchmark datasets collected from the UCI machine learning repository \citep{Dua:2019}, Keel Repository \citep{alcala2011keel} and ASU repository \citep{li2018feature}. A brief description of these datasets can be found in Table \ref{data_description}.\par
Average performances over $20$ independent trials are reported in Table \ref{tab_real}; the EWP algorithm outperforms by a large margin across all instances when compared to the other peer algorithms. To determine the statistical significance of the results, we employ  Wilcoxon's signed-rank test \citep{wasserman2006all} at the 5\% level of significance. In Table \ref{tab_real}, an entry marked with $+$ ($\simeq$) differs from the corresponding result of EWP with statistical significance. Finally, we emphasize that our results comprise a conservative comparison in that parameters $s_0=-1$ and $\eta=1.05$ are fixed across \textit{all} settings. While this demonstrates that careful tuning of these parameters is not necessary for successful clustering, performance can be further improved by doing so \citep{xu2019power},

\begin{table}
\centering
\caption{Source and Description of the Datasets}
\label{data_description}
\begin{tabular}{|p{2.8cm}|C{2.8cm}|C{0.5cm}|C{0.7cm}|C{0.7cm}|}
\hline
\textbf{Datasets} & \textbf{Source} & $\mathbf{k}$ & $\mathbf{n}$ & $\mathbf{p}$  \\ 
\hline
Iris & \href{https://sci2s.ugr.es/keel/category.php?cat=clas}{Keel Repository} & 3 & 150 & 4 \\
Automobile & \href{https://sci2s.ugr.es/keel/category.php?cat=clas}{Keel Repository} & 6 & 150 & 25\\
Mammographic & \href{https://sci2s.ugr.es/keel/category.php?cat=clas}{Keel Repository} & 2 & 830 & 5 \\
Newthyroid & \href{https://sci2s.ugr.es/keel/category.php?cat=clas}{Keel Repository} & 3 & 215 & 5\\
Wine & \href{https://sci2s.ugr.es/keel/category.php?cat=clas}{Keel Repository} & 3 & 178 & 13\\
WDBC & \href{https://sci2s.ugr.es/keel/category.php?cat=clas}{Keel Repository} & 2 & 569 & 30\\
Movement Libras & \href{https://sci2s.ugr.es/keel/category.php?cat=clas}{Keel Repository} & 15 & 360 & 90\\

Wall Robot 4 & \href{https://archive.ics.uci.edu/ml/datasets.php}{UCI Repository} & 4 & 5456 & 4\\
WarpAR10P & \href{http://featureselection.asu.edu/datasets.php}{ASU Repository} & 10 & 130 & 2400 \\
WarpPIE10P & \href{http://featureselection.asu.edu/datasets.php}{ASU Repository} & 10 & 210 & 2420\\
\hline
\end{tabular}

\end{table}

      
\begin{table}
    \caption{NMI values on Benchmark Real Data}
    \label{tab_real}
    \centering
    \begin{tabular}{|C{3cm}|C{2.5cm}|C{2.5cm}|C{2.5cm}|C{2.5cm}|C{2.5cm}|}
       \hline
       Datasets & $k$-means  &  Power $k$-means & $WK$-means & Sparse $k$-means & EWP-$k$-means\\
       \hline
       Newthyroid & $0.403^+(0.002)$ & $0.262^+(0.002)$ & $0.407^+(0.004)$ & $0.102^+(0.002)$ & \textbf{0.5321}(0.003)\\
       \hline
       Automobile & $0.165^+(0.009)$ & $0.203^+(0.010)$ &  $0.168^+(0.005)$ & $0.168^+(0.007)$ & \textbf{0.311}(0.003)\\
       \hline
       WarpAR10P & $0.170^+(0.042)$ & $0.233^+(0.031)$ & $0.201^+(0.019)$ & $0.185^+(0.008)$ & \textbf{0.350}(0.047)\\
       \hline
       WarpPIE10P & $0.240^\simeq(0.031)$ & $0.240^\simeq(0.028)$ & $0.180^+(0.022)$ & $0.179^+(0.002)$ & \textbf{0.2761}(0.041)\\
       \hline
       Iris & $0.758^+(0.003)$ & $0.788^+(0.005)$ & $0.741^+(0.005)$ & $0.813^\simeq(0.002)$ & \textbf{0.849}(0.005)\\
       \hline
       Wine & $0.428^+(0.001)$ & $0.642^+(0.005)$ & $0.416^+(0.002)$ & $0.428^+(0.001)$ & \textbf{0.747}(0.003) \\
       \hline
       Mammographic & $0.107^+(0.001)$ & $0.019^+(0.003)$ & $0.115^+(0.001)$ & $0.110^+(0.002)$ & \textbf{0.405}(0.002)\\
       \hline
       WDBC & $0.463^+(0.002)$ & $0.005^+(0.005)$ & $0.464^+(0.002)$ & $0.467^+(0.003)$ & \textbf{0.656}(0.001)\\
       \hline
       LIBRAS & $0.553^\simeq(0.017)$ & $0.339^+(0.020)$ & $0.461^+(0.021)$ & $0.254^+(0.014)$ & \textbf{0.575}(0.009)\\
       \hline
       Wall Robot 4 & $0.167^+(0.027)$ & $0.183^+(0.013)$ & $0.171^+(0.030)$ & $0.186^+(0.012)$ & \textbf{0.234}(0.003)\\
       \hline
      
    \end{tabular}
\end{table}
  %
\section{Discussion}
\label{discussion}
Despite decades of advancement on $k$-means clustering, Lloyd's algorithm remains the most popular choice in spite of its well-known drawbacks. Extensions and variants that address these flaws fail to preserve its simplicity, scalability, and ease of use. Many of these methods still fall short at poor local optima or fail when data are high-dimensional with low signal-to-noise ratio, and few come with rigorous statistical guarantees such as consistency. 

The contributions in this paper seek to fill this methodological gap, with a novel formulation that draws from good intuition in classic and recent developments. With emphasis on simplicity as a chief priority, we derive a method that can be seen as a drop-in replacement to Lloyd's classic $k$-means algorithm, reaping large improvements in practice even when there are a large number of clusters or features in the data. By designing the algorithm from the perspective of MM, our method is robust as a descent algorithm and achieves an ideal $\mathcal{O}(nkp)$ complexity. In contrast to popular approaches such as sparse $k$-means and power $k$-means, the proposed approach is provably consistent. 

Extending the intuition to robust measures and other divergences in place of the Euclidean distance are warranted.
Further, research toward finite-sample prediction error bounds or convergence rates relating to the annealing schedule will also be fruitful avenues for future work. 
\bibliographystyle{apalike}

\end{document}